\newtheorem*{nonumlemma}{Lemma}
\newcommand{\ramapass}{\textcolor{black}}
\newcommand{\ramaarxiv}{\textcolor{black}}
\newcommand{\ramacr}{\textcolor{black}}
\newcommand{\retwo}{\textcolor{black}}
\newcommand{\reb}{\textcolor{black}}
\newcommand{\nsym}{neural-symbolic\xspace}
\newcommand{\myvec}[1]{\mathbf{#1}}
\newcommand{\va}{\myvec{a}}
\newcommand{\vi}{\myvec{i}}
\newcommand{\vx}{\myvec{x}}
\newcommand{\vz}{\myvec{z}}
\newcommand{\pmvar}[1]{\scriptsize{$\pm$#1}}
\newcommand{\KLpq}[2]{\mathbb{KL}\left({#1}||{#2}\right)}
\newcommand{\KL}{\mathrm{KL}}
\newcommand{\etc}{\emph{etc.}}
\newcommand{\ie}{\emph{i.e.}\xspace}
\newcommand{\eg}{\emph{e.g.}\xspace}
\newcommand{\vs}{\emph{vs.}\xspace}
\newcommand{\eat}[1]{}
\newcommand{\remark}[2]{} 
\newcommand{\replace}[2]{} 
\newcommand{\stefan}[1]{}
\newcommand{\elbo}{\mathrm{elbo}}
\newcommand{\vqabayes}{Prob-NMN}
\newtheorem{lemma}{Lemma}
\icmltitlerunning{Probabilistic Neural-symbolic Models for VQA}
\begin{document}

\twocolumn[
\icmltitle{Probabilistic Neural-symbolic Models for \\Interpretable
Visual Question Answering}



\icmlsetsymbol{partgatech}{*}

\begin{icmlauthorlist}
\icmlauthor{Ramakrishna Vedantam}{partgatech,fair}
\icmlauthor{Karan Desai}{gatech}
\icmlauthor{Stefan Lee}{gatech}
\icmlauthor{Marcus Rohrbach}{fair}
\icmlauthor{Dhruv Batra}{fair,gatech}
\icmlauthor{Devi Parikh}{fair,gatech}
\end{icmlauthorlist}

\icmlaffiliation{fair}{Facebook AI Research}
\icmlaffiliation{gatech}{Georgia Tech}

\icmlcorrespondingauthor{Ramakrishna Vedantam}{ramav@fb.com}

\icmlkeywords{Machine Learning, ICML}

\vskip 0.3in
]



\printAffiliationsAndNotice{\textsuperscript{*} Part of this work was done when R.V. was at Georgia Tech.}  


\begin{abstract}
    \ramapass{We propose a new class of probabilistic neural-symbolic models,
    that have symbolic functional programs as a latent, stochastic variable.
    Instantiated in the context of visual question answering, our probabilistic
    formulation offers two key conceptual advantages over prior neural-symbolic models for VQA.
    Firstly, the programs generated by our model are more understandable while requiring
    less number of teaching examples. Secondly, we show that one can pose counterfactual
    scenarios to the model, to probe its beliefs on the programs that could
    lead to a specified answer given an image. Our results on the \ramaarxiv{CLEVR and SHAPES 
    datasets} verify our hypotheses, showing that the model gets better
    program (and answer) prediction accuracy even in the low data regime, and
    allows one to probe the coherence and consistency of reasoning performed.}
\end{abstract}

\section{Introduction}
\label{sec:intro}
Building flexible learning and reasoning machines is a central challenge in 
Artificial Intelligence (AI). Deep representation learning~\cite{LeCun2015}
provides us powerful, flexible function approximations that
have resulted in state-of-the-art
performance across multiple AI tasks such as
recognition~\citep{Krizhevsky2012,He2015msra}, 
machine translation~\citep{Sutskever2014}, visual question
answering~\citep{Antol2015}, speech modeling~\cite{Van_den_Oord2016-un},
and reinforcement learning~\citep{Mnih2015DQN}.
However, many
aspects of human cognition such as systematic compositional
generalization
\ramaarxiv{(\eg, understanding that ``John loves Mary'' could
imply that ``Mary loves John'')}~\cite{Lake2017,Lake2017Gen}
have proved harder to model.

Symbol manipulation~\citep{Newell1976-gv},
on the other hand lacks flexible
learning capabilities but supports
strong generalization and systematicity~\cite{Lake2017Gen}.
Consequently, many works have focused
on building \nsym{} models with the aim of combining the best of 
representation learning and symbolic reasoning
~\citep{Bader_2005,Evans_2018,Valiant_2003,Yi2018,Yin2018}.

\begin{figure*}
	\centering
	\includegraphics[width=\textwidth]{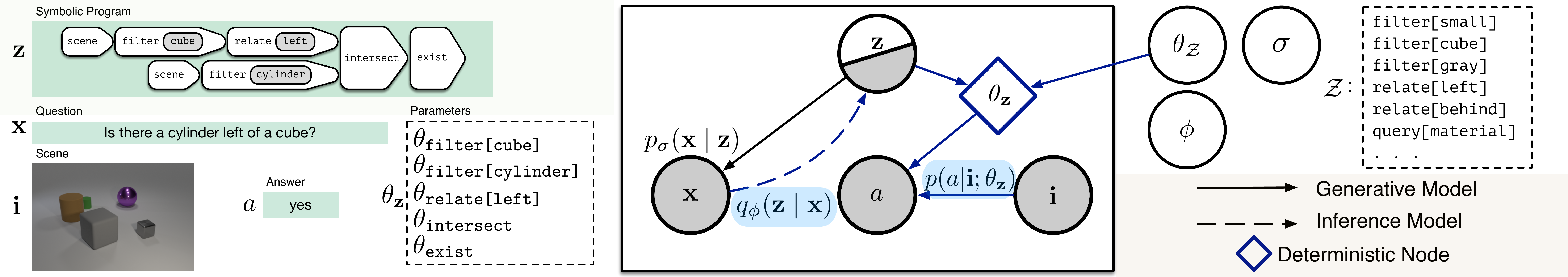}
	\caption{\vspace{-2pt}\textbf{Probabilistic Neural Symbolic Models for VQA:} We show our novel
	probabilistic
	\nsym{} model (\textbf{\vqabayes{}}) in plate notation (right). Given image $\vi$, programs
	$\vz$ are a (partially observed) latent variable executed on the image to generate an answer $a$
	using parameters $\theta_{\vz}$ (left). For VQA, we infer $\vz$ given question $\vx$
	to generate an answer. Inference on $\vz$ given $a$ and $\vi$ tests coherence
	of reasoning patterns. Baseline non-probabilsitic neural symbolic models (\textbf{NMNs})
	capture (subset of) terms and edges shown in blue, and are less data-efficient
	and less interpretable reasoning models than our (probabilistic) proposal.\vspace{-10pt}}
	\label{fig:model}
\end{figure*}

As we scale machine learning to machine reasoning
~\cite{Andreas2016-vj,Andreas2016NMN,Bottou2011-bb,Weston2015-sd}
a natural desire is to provide guidance to the model in the form of
instructions. In such a context, symbols
are more intuitive to specify than say the parameters of a neural network.
Thus, a promising method for interpretable reasoning models 
is to specify the reasoning plan 
symbolically and learn to execute it using deep learning.

This~\nsym{} methodology has been extensively used to model
reasoning capabilities in visual
question answering (VQA)~\citep{Andreas2016NMN,Hu2017NMN,Mao2018,Mascharka2018,Johnson2017,
Yi2018} and
to some extent in reinforcement learning
~\citep{Andreas2016-vj,Das2018}. 
Concretely, in the VQA task one is given an image $\vi$,
a question $\vx$ (``\emph{Is there a cylinder left of a cube?}''),
for which we would like to provide an answer $a$ (\texttt{yes}).
In addition, one may also
optionally be provided a program $\vz$ for the question that specifies
a reasoning plan. For \eg, one might ask a model to
apply the \texttt{filter[cube]} operator,
follwed by \texttt{relate[left]}, and then \texttt{And}
the result together with \texttt{filter[cylinder]} to
predict the answer, where each operator is parameterized as a neural
network (\cref{fig:model}).

The scope of this current work is to provide a probabilistic
framework for such~\nsym{} models. This provides two benefits:
firstly, given a limited number of teaching
examples of plans / programs for a given question / context,
we show one can better capture the association between
the question and programs to provide more understandable and legible
program explanations for novel questions. Inspired by~\citet{Dragan2013},
we call this notion data-efficient legibility, since the model's program (actions)
in this case need to be legible,~\ie clearly convey the
question (goal specification) the model has been given.

Secondly, the formulation makes it possible to probe deeper
into the model's reasoning capabilities.
For \eg, one can test if the reasoning done by the system is:
1) coherent: programs which lead to similar answers are consistent
with each other and 2) sensitive: tweaking the
answer meaningfully changes the underlying reasoning plan.


\ramacr{Our probabilistic model treats 
functional programs ($\vz$) as a stochastic latent variable.}
This allows us to share statistics
meaningfully between questions with associated programs
and those without corresponding
programs. This aids data-efficient legibility.
Secondly, probabilistic modeling brings to bear a rich set of 
inferential techniques to answer conditional queries on the beliefs of the model.
After fitting a model for VQA, one can sample $\vz \sim
p(\vz| \vi, a)$, to probe if the reasoning done by the model is
coherent (\ie multiple programs leading to answer \texttt{yes} are coherent)
and sensitive to a different answer ($a$) (say, \texttt{no}).

Given an image ($\vi$), we build a model for 
$p(\vx, \vz, a| \vi)$ (see~\cref{fig:model}); where the model factorizes as:
$p(\vx, \vz, a| \vi) = p(\vz) p(\vx| \vz) p(a| \vz, \vi)$.
The implied generative process is as follows. First we sample a program $\vz$,
which generates questions $\vx$. Further, given a program $\vz$ and an image $\vi$
we generate answers $a$. Note that based on the symbolic program $\vz$, we dynamically
instantiate parameters of a neural network $\theta_\vz$ (these are deterministic,
given $\vz$), by composing smaller neural-modules
for each symbol in the program. This is similar to prior work on neural-symbolic VQA
~\cite{Hu2017NMN,Johnson2017} using neural module networks (NMNs).
In comparison to prior works, our probabilistic formulation (shorthand~\vqabayes) leads to better semi-supervised learning, and reasoning capabilities\footnote{Note that this model assumes independence of programs
from images, which corresponds to the weak sampling assumptions
in concept learning~\citep{Tenenbaum1999-ps}, one can handle question
premise, \ie that people might ask a specific set of questions
for an image in such a model by reparameterizing the answer variable to include a
relevance label.}.


Our technical contribution is to formulate semi-supervised learning
with this deep generative neural-symbolic model using variational inference~\cite{Jordan1999VI}.
First, we derive variational lower bounds on the evidence
for the model for the semi-supervised and supervised cases, and show how this motivates
the semi-supervised objectives used in previous work with discrete structured latent spaces
~\cite{Miao2016,Yin2018}. Next, we show how to learn program execution,~\ie $p(a|\vz, \vi)$
jointly with the remaining terms in the model by devising a stage-wise optimization
algorithm \ramacr{inspired by~\cite{Johnson2017}.}


%
\textbf{Contributions.}
First, we provide tractable algorithms for training models with probabilistic latent programs
	that also learn to execute them in an end-to-end manner.
	Second, we take the first steps towards deriving useful semi-supervised learning
	objectives for the class of structured sequential latent variable models~\cite{Miao2016,Yin2018}.
	Third, our approach enables interpretable reasoning systems that are more 
	legible with less supervision, and expose their reasoning process to test 
	coherence and sensitivity.
Fourth, our system offers improvements on the \ramaarxiv{CLEVR~\citep{Johnson2017}
	as well as SHAPES~\citep{Andreas2016NMN} datasets in the low question program supervision regime.
	That is, our model answers questions more accurately and with legible (understandable) programs
	than adaptations of prior work to our setting as well as baseline non-probabilistic
	variants of our model.}

\section{Methods}
\label{sec:vqabayes}
\ramapass{We first explain the model and its parameterization, 
then detail the training objectives along with a discussion of
a stage-wise training procedure.}

Let $\vi \in \mathbb{R}^{U\times V}$ be an input image, 
$\vx$ be a question, which is comprised of a sequence of words
$(x_1, \cdots, x_t)$, where each $x_t \in \mathcal{X}$, where
$\mathcal{X}$ is the vocabulary comprising all words,
similarly, $a \in \mathcal{A}$ be the answer
(where $\mathcal{A}$ is the answer vocabulary),
and $\vz$ be the prefix serialization of a program.
That is, $\vz = (z_1, \cdots, z_T)\ni z_t \in
\mathcal{Z}$, where $\mathcal{Z}$ is the program token
vocabulary.

The model we describe below assumes $\vz$
is a latent variable that is observed only for a subset of datapoints in $\mathcal{D}$.
\ramapass{Concretely, the programs $\vz$ express (prefix-serializations of)
tree structured graphs. Computations are performed given this
symbolic representation by instantiating, for each symbol
$z \in \mathcal{Z}$ a corresponding neural network (\cref{fig:model}, right) with parameters
$\theta_z$ (\cref{fig:model}). That is, given a symbol in the program,
say \texttt{find[green]}, the model instantiates parameters $\theta_{\texttt{find[green]}}$.
In this manner, the mapping $p(a| \vi, \vz)$ is operationalized as $p(a| \vi; \theta_\vz)$,
where $\theta_\vz = \{\theta_{z_t}\}_{t=1}^T$.}


\ramacr{Our model factorizes as
$p(\vx, \vz, a| \vi) = p(\vz) p(\vx|\vz) p(a|\vi; \theta_\vz)$
(see~\cref{fig:model}), where parameters $\theta_{\vz}$
are a deterministic node in the graph instantiated given the program
$\vz$ and $\theta_\mathcal{Z}$ (which
represents the concatenation of parameters across all tokens in $\mathcal{Z}$).
In addition
to the generative model, we also use an inference network $q_{\phi}(\vz|\vx)$
to map questions to latent structured programs. Thus, the generative parameters
in the model are $\{\theta_{\mathcal{Z}}, \sigma\}$, and inference parameters are ${\phi}$.}

\subsection{Parameterization} \ramacr{The terms $p(\vz)$, $p_\sigma(\vx| \vz)$ and $q_{\phi}(\vz|\vx)$ are all
parameterized as LSTM neural networks~\citep{Hochreiter1997-xj}. The program prior $p(\vz)$ is pretrained using
maximum likelihood on programs
simulated using the syntax of the program or a held-out dataset of programs. The
prior is learned first and kept fixed for the rest of the training process.
Finally, the parameters $\theta_z$ for symbols $z$ parameterize small, deep
convolutional neural networks which optionally take as input an attention map over the image
(see Appendix for more details).}

For reference, previous non-probabilistic, neural-symbolic models for VQA~\cite{Hu2017NMN,Johnson2017,Mascharka2018}
model $q_\phi(\vz | \vx)$ and $p(\va | \vi;\theta_{\vz})$ terms from our present model (see blue arrows in~\cref{fig:model}).

\subsection{Learning}
\ramapass{We assume access to a dataset $\mathcal{D} = \{\vx^n, \vz^n\} \cup \{\vx^m, a^m, \vi^m\}$ 
where $m$ indexes the visual question answering dataset, with questions, corresponding images
and answers, while $n$ indexes a teaching dataset, which provides the corresponding programs 
for a question, explaining the steps required to solve the question.}
For \ramaarxiv{data-efficient} legibility, we are interested in the setting where
$N < M$~\ie we have few annotated examples of programs which might be
more expensive to specify.

Given this, learning in our model consists of estimating parameters
$\{\theta_{\mathcal{Z}}, \sigma, \phi\}$. We do this in a
stage-wise fashion (shown below) (c.f.~\citet{Johnson2017}):

\textbf{Stage-wise Optimization.}\\
1) Question Coding: Optimizing parameters $\{\sigma, \phi\}$ to learn a good code for
    questions $\vx$ in the latent space $\vz$.\\
    2) Module Training: Optimizing parameters $\theta_{\mathcal{Z}}$ for learning
    to execute symbols $z$ using neural networks $\theta_z$.\\
    3) Joint Training: Learning all the parameters of the model $\{\theta_{\mathcal{Z}}, \sigma, \phi\}$.\\
%
\ramapass{We describe each of the stages in detail below, and defer the reader to the Appendix for a
more details.}

\noindent\textbf{Question Coding.}
\ramacr{We fit the model on the evidence for questions $\vx$
and programs $\vz$, marginalizing answers $a$,~\ie{} $\sum_n \log p(\vx^n, \vz^n)
+ \sum_m \log p(\vx^m)$. We lower-bound the second
term, optimizing the evidence lower bound (ELBO) with an amortized inference
network $q_\phi$ (c.f.~\citet{Kingma2013-wx,Rezende2014})}:
\vspace{-5pt}
\begin{align}\label{eqn:qc_elbo}
    \begin{split}
\sum_m \log p(\vx^m) &\ge \sum_m \mathbb{E}_{\vz\sim q_\phi(\vz|\vx^m)}[\log p_\sigma(\vx^m| \vz) - \\
&\beta \log q_\phi(\vz| \vx^m) + \beta \log p(\vz)] = \mathcal{U}^\beta_{qc}
    \end{split}
\end{align}
where the lower bound holds for $\beta > 1$. \ramacr{In practice,
we follow prior work in violating the bound, using
$\beta < 1$ to scale the contribution from
$\KLpq{q(\vz|\vx)}{p(\vz)}$
in~\cref{eqn:qc_elbo}. This has been shown to be important for learning meaningful representations
when decoding sequences (c.f.~\citet{Alemi2017broken,Bowman2016,Miao2016,Yin2018}).
For more context, \citet{Alemi2017broken}
provides theoretical justifications for why this is desirable when decoding
sequences which we discuss in more detail in the Appendix.}

\ramaarxiv{The $\mathcal{U}_{qc}$ term in~\cref{eqn:qc_elbo} does not capture the semantics of programs,
in terms of how they relate to particular questions. For modeling legible programs,}
we would also like to make use of the labelled data $\{\vx^n, \vz^n\}$
to learn associations between questions and programs, and provide legible explanations
for novel questions $\vx$. To do this, one can factorize the model to maximize
$\mathcal{L} = \sum_n \log p(\vx^n| \vz^n) + \log p(\vz^n)$. \ramapass{While in theory
given the joint, it is possible to estimate $p(\vz| \vx)$, this is expensive and requires
an (in general) intractable sum over all possible programs. Ideally, one would like to
reuse
the \emph{same} variational approximation $q_\phi(\cdot)$
that we are training for $\mathcal{U}_{qc}$ so that it
learns from both labelled as well
as unlabelled data (c.f.~\citet{Kingma2014SemiSupervisedLW}).
We prove the following lemma and use it to construct an objective that makes
use of $q_\phi$, and relate it to the evidence.}

\begin{lemma}\label{lemma:question_code}
    Given observations $\{\vx^n, \vz^n\}$ and $p(\vx, \vz) = p(\vz) p_{\sigma}(\vx| \vz)$,
    let $z_t$, the token at the $t^{th}$ timestep in a sequence $\vz$ be distributed
    as a categorical with parameters $\pi_t$. Let us denote 
    $\Pi = \{\pi_t\}_{t=1}^{T}$, the joint random variable over all $\pi_t$. Then,
    the following is a lower bound on the joint evidence $\mathcal{L} = \sum_n \log p(\vz^n) + \log p_\sigma(\vx^n| \vz^n)$:
    \vspace{-10pt}
    \begin{multline}
    \mathcal{L} \ge \sum_{n=1}^N \log q_{\phi}(\vz^n|\vx^n) + \log p_{\sigma}(\vx^n|\vz^n) -\\
    \KLpq{q(\Pi|\vz^n, \vx^n)}{p(\Pi)}
    \end{multline}\label{eqn:question_code_sup}
    where $p(\Pi)$ is a distribution over the sampling distributions implied by the prior $p(\vz)$
    and $q(\Pi|\vx^n, \vz^n) = q(\pi_0) \prod_{t=1}^T q(\pi_t|\vx^n, z_0^n, \cdots, z_{t-1}^n)$, where
    each $q(\pi_t|\cdot)$ is a delta distribution on the probability simplex.
\end{lemma}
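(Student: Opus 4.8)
The plan is to recognize the claimed inequality as an ordinary evidence lower bound in an augmented model where the per-step categorical parameters $\Pi = \{\pi_t\}$ play the role of the latent variable. First I would observe that the likelihood term $\log p_\sigma(\vx^n|\vz^n)$ appears identically on both sides of the stated bound, so it suffices to prove the per-example inequality $\log p(\vz^n) \ge \log q_\phi(\vz^n|\vx^n) - \KLpq{q(\Pi|\vz^n,\vx^n)}{p(\Pi)}$, and then add $\log p_\sigma(\vx^n|\vz^n)$ back to both sides and sum over $n$ to recover $\mathcal{L}$.

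Next I would build the augmented latent-variable model over $(\Pi,\vz)$. Because the program is decoded autoregressively, the prior factorizes as $p(\vz) = \prod_t p(z_t|z_{<t})$, where each conditional is a categorical whose parameter vector is emitted step by step; collecting these vectors into $\Pi$ and treating them as a latent gives a joint $p(\Pi,\vz) = p(\Pi)\,p(\vz|\Pi)$ with $p(\vz|\Pi) = \prod_t (\pi_t)_{z_t}$, and $p(\Pi)$ is exactly the ``distribution over sampling distributions implied by the prior.'' The consistency requirement is that marginalizing this joint over $\Pi$ returns the original prior, $\int p(\Pi)\,p(\vz|\Pi)\,d\Pi = p(\vz)$; once this holds, the standard variational machinery applies directly.

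With the augmented model in hand I would apply Jensen's inequality in the usual ELBO form, taking $q(\Pi|\vz^n,\vx^n)$ as the variational posterior: $\log p(\vz^n) \ge \E_{q(\Pi|\vz^n,\vx^n)}[\log p(\vz^n|\Pi)] - \KLpq{q(\Pi|\vz^n,\vx^n)}{p(\Pi)}$. The decisive simplification comes from the hypothesis that each $q(\pi_t|\cdot)$ is a delta on the simplex, so $q(\Pi|\vz^n,\vx^n)$ concentrates all mass on the single parameter sequence $\Pi^q$ produced by the inference network; the expectation then collapses to a point evaluation, and at that point $p(\vz^n|\Pi^q) = \prod_t (\pi^q_t)_{z^n_t} = \prod_t q_\phi(z^n_t|\vx^n, z^n_{<t}) = q_\phi(\vz^n|\vx^n)$. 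Substituting $\E_{q}[\log p(\vz^n|\Pi)] = \log q_\phi(\vz^n|\vx^n)$ yields the per-example bound, and summing completes the lemma.

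The hard part will be conceptual rather than computational: making the object $p(\Pi)$ and the augmented model precise. Because the prior's step parameters are deterministic functions of the prefix $z_{<t}$, $\Pi$ is coupled to $\vz$, and I must set up $p(\Pi,\vz)$ carefully so that its $\vz$-marginal genuinely equals $p(\vz)$ rather than a fixed-parameter approximation to it. A second subtlety is that a delta posterior $q(\Pi|\cdot)$ against a continuous $p(\Pi)$ makes the divergence formally degenerate, so I would either restrict $\Pi$ to the finitely many parameter sequences reachable by the autoregressive prior (rendering both $q(\Pi|\cdot)$ and $p(\Pi)$ discrete) or interpret the $\KL$ in the limiting sense standard for amortized inference with deterministic encoders. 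The remaining algebra — the autoregressive factorization and the point evaluation of the categorical likelihood — is routine.
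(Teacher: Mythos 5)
Your proposal is correct and follows essentially the same route as the paper's own proof: augment the model with $\Pi$ as a latent variable so that $p(\vx,\vz,\Pi) = p(\Pi)\,p(\vz|\Pi)\,p_\sigma(\vx|\vz)$, apply Jensen's inequality with the delta-function variational posterior $q(\Pi|\vz,\vx)$, and collapse the expectation $\E_{q}[\log p(\vz|\Pi)]$ pointwise to $\log q_\phi(\vz|\vx)$ (your preliminary step of cancelling $\log p_\sigma(\vx^n|\vz^n)$ from both sides is only a cosmetic rearrangement, since that term is constant in $\Pi$ and pulls out of the expectation). The degeneracy you flag in the delta-versus-continuous $\KL$ term is real and is acknowledged by the paper itself immediately after the lemma, where it concedes the $\KL$ term becomes infinite and the bound is therefore used only through its first two terms.
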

See Appendix for a proof of the result. \ramacr{This is an extension
of the result for a related graphical model (with discrete $\vz$ observations)
from~\cite{Kingma2014SemiSupervisedLW,Keng2017-lp} to
the case where $\vz$ is a sequence.}

\ramaarxiv{In practice, }the bound above \ramacr{not useful,} as the proof assumes a delta posterior
$q(\Pi|\vz, \vx)$ which makes the last $\KL$ term $\infty$. This means we have to resort to
learning only the first two terms in~\cref{eqn:question_code_sup} as an approximation:
\begin{equation}\label{eqn:approx_sup_evidence}\vspace{-5pt}
\mathcal{L} \approx \sum_{n=1}^N \alpha \log q_{\phi}(\vz^n|\vx^n) + \log p_{\sigma}(\vx^n|\vz^n)
\end{equation}
where $\alpha > 1$ is a scaling on $\log q_\phi$, also used in
~\citet{Kingma2013-wx,Miao2016}
\ramacr{(see Appendix for more details and a justification).}


\noindent{\textbf{Connections to other objectives.}} To our knowledge, two previous works
~\citep{Miao2016,Yin2018}
have formulated semi-supervised learning with discrete (sequential) latent variable
models. While~\citet{Miao2016} write the supervised term as $\log p_{\phi}(\vz| \vx)$,
~\citet{Yin2018} write it as $\log q_{\phi}(\vz| \vx) + \log p_{\sigma}(\vx| \vz)$.
The lemma above provides a clarifying perspective on both the objectives, firstly
showing that $p_{\phi}$ should actually be written as $q_{\phi}$ (and
suggests an additional $p_{\sigma}$ term), and that
the objective from~\citet{Yin2018} is actually a part of a loose lower bound
on the evidence for $\mathcal{L}$ providing some justification for the intuition
presented in~\citet{Yin2018}\footnote{A promising direction for obtaining tighter bounds could be to change the parameterization
of the variational $q(\Pi)$ distribution. \ramapass{Overall, learning of 
$q_\phi$ is challenging in the structured, discrete space of sequences and a proper treatment
of how to train this term in a semi-supervised setting is important for this class of models.}}.

We next explain the evidence formulation for the full graphical model; and then introduce the module training
and joint training steps.

\noindent\textbf{Module and Joint Training.}
For the full model (including the answers $a$), the evidence is
$\mathcal{L} + \mathcal{U}_f$, where $\mathcal{U}_f=\sum_{m=1}^{M} \log p(\vx^m, a^m|\vi^m)$
and $\mathcal{L}=\sum_{n=1}^N \log p(\vx^n, \vz^n)$. Similar to the previous
section, one can derive a variational lower bound~\citep{Jordan1999VI}
on $\mathcal{U}_f$ (c.f.~\cite{Vedantam2018generative,Suzuki2017}):
\vspace{-10pt}
\begin{multline}\label{eqn:full_evidence_lower_bound}
    \mathcal{U}_f \ge
\sum_{m=1}^M E_{\vz \sim q_{\phi}(\vz | \vx^m)}
[\log p(a^m| \vi^m; \theta_{\vz}) + \\\log p_{\sigma}(\vx^m| \vz)] - \KLpq{q_{\phi}(\vz| \vx^m)}{\log p(\vz)}
\end{multline}\vspace{-10pt}

\textbf{Module Training.}
During module training, first we optimize the model only w.r.t the parameters responsible for
neural execution of the symbolic programs, namely $\theta_{\mathcal{Z}}$.
Concretely, we maximize:
\vspace{-5pt}
\begin{equation}
    \label{eqn:nmn_training}
    \max_{\theta_{\mathcal{Z}}} \sum_{m=1}^M E_{z\sim q(\vz| \vx^m)}[\log p_{\theta_z}(a^m|\vz, \vi^m)]
\end{equation}
The goal is to find a good initialization of the module parameters, say
$\theta_{\texttt{find[green]}}$ that binds the execution to the computations
expected for the symbol $\texttt{find[green]}$ (namely the neural module network).

\noindent\textbf{Joint Training.}
Having trained the question code and the neural module
network parameters, we train all terms jointly, optimizing the complete evidence
with the lower bound $\mathcal{L} + \mathcal{U}_f$.
We make changes to the above objective (across all the stages), by adding in
scaling factors $\alpha$, $\beta$ and $\gamma$ for corresponding terms in the
objective, and write out the $\KL$ term (\cref{eqn:full_evidence_lower_bound}),
subsuming it into the expectation:
\vspace{-10pt}
\begin{multline}\label{eqn:final_joint_elbo}
    \mathcal{L} + \mathcal{U}_f \approx
\sum_{m=1}^M E_{\vz \sim q_{\phi}(\vz | \vx^m)}
[\gamma \log p(a^m| \vi^m;\theta_{\vz}) + \\\log p_{\sigma}(\vx^m| \vz)- \beta \log q_{\phi}(\vz| \vx^m) + \beta \log p(\vz)] + \\ \sum_{n=1}^{N}\left[\alpha \log q_{\phi}(\vz^n| \vx^n) + \log p_{\sigma}(\vx^n|\vz^n) \right]
\end{multline}
\ramaarxiv{where $\gamma > 1$ is a scaling factor on the answer likelihood, which has fewer bits
of information than the question. For answers $a$, which have probability
mass functions, $\gamma > 1$ still gives us a valid lower bound}\footnote{Similar scaling
factors have been found to be useful in prior work~\citep{Vedantam2018generative} (Appendix A.3)
in terms of shaping the latent space.}. The same values of $\beta$ and $\alpha$ are used
as in question coding (and for the same reasons \ramaarxiv{explained above}).

\ramaarxiv{The first term, with expectation over $\vz\sim q_\phi(\cdot)$
is not differentiable with respect to $\phi$. Thus we use the} REINFORCE~\citep{Williams1992}
estimator with a moving
average baseline \ramaarxiv{to get a gradient estimate
for $\phi$ (see Appendix for more details.)}.
\ramaarxiv{We take the gradients (where available) for updating the
rest of the parameters.}

\subsection{Benefits of Three Stage Training}
\setlength{\textfloatsep}{5pt}
\begin{algorithm}[tb]
    \caption{\vqabayes{} Training Procedure}
    \label{alg:example}
 \begin{algorithmic}
    \STATE {\bfseries Given:} $\mathcal{D} = \{\vx^n, \vz^n\}_{n=1}^{N} \cup \{\vx^m, a^m, \vi^m\}_{m=1}^M$, $p(\vz)$
    \STATE {\bfseries Initialize:} $\theta_\mathcal{Z}$, $\sigma$, $\phi$\\
    \STATE {\bfseries Set:} \ramaarxiv{$\beta<1$, $\alpha > 1$, $\gamma > 1$}\\
    {\bfseries Question Coding}
    \STATE Estimate $\sigma$, $\phi$ optimizing~\cref{eqn:qc_elbo} $+$~\cref{eqn:approx_sup_evidence}\\
    {\bfseries Module Training}
    \STATE Estimate $\theta_{\mathcal{Z}}$ optimizing~\cref{eqn:nmn_training}\\
    {\bfseries Joint Training}
    \STATE Estimate $\theta_{\mathcal{Z}}$, $\sigma$, $\phi$ \ramaarxiv{optimizing~\cref{eqn:final_joint_elbo}}
 \end{algorithmic}
\textbf{Note:} Updates to $\phi$ from~\cref{eqn:qc_elbo} and~\cref{eqn:full_evidence_lower_bound}
 happen via. score function estimator, and the path derivative.
 Updates to $\theta_{\mathcal{Z}}$, $\sigma$ are computed using the gradient.
\end{algorithm}
\label{sec:discussionStageTraining}
In this section, we outline the difficulties that arise when we try to optimize
~\cref{eqn:final_joint_elbo} directly, without following the three stage procedure.
Let us consider question coding -- if we do not do question coding independently
of the answer, learning the parameters $\theta_z$ of the neural module network becomes
difficult, especially when $N << M$ as the mapping $p(a|\vz, \vi)$
is implemented using neural modules $p_{\theta_z}(a|\vi)$. This optimization
is discrete in the program choices, which hurts when $q_\phi$ is
uncertain (or has not converged). Next, training the joint model without
first running module training is possible, but trickier, because
the gradient from an untrained neural module network would pass
into the $q_\phi(\vz|\vx)$ inference network, adding noise to the updates.
Indeed, we find that inference often deteriorates when trained with
REINFORCE on a reward computed from an untrained network (\cref{tab:result}).

\section{Related Work}\label{sec:related_work}
We first explain differences with other discrete structured latent variable
models proposed in the literature. We then connect our work to the broader
context of research in reasoning and visual question answering (VQA)
and conclude by discussing interpretability in the context of VQA.

\noindent\textbf{Discrete Structured Latent Variables.} 
\ramacr{Previous works have applied amortized variational inference~\citep{Kingma2013-wx,Rezende2014}
to build discrete, structured latent variable models, where the observations as well as the latents are either
sequences~\cite{Miao2016} or tree structured programs~\citep{Yin2018}.} While~\citet{Yin2018} consider the
problem of parsing text into programs,~\citet{Miao2016}
generate (textual) summaries using a latent variable~\cite{Miao2016}. In contrast,
\ramacr{our joint model
is richer since in addition to parsing, we also learn to decode a latent program
into answers by executing neural modules $\theta_\vz$.} Finally,
as discussed in~\cref{sec:vqabayes}, our derivation for the lower bound
on the question-program evidence provides some understanding of the objectives used
in these prior works for semi-supervised learning~\cite{Yin2018,Miao2016}.

\noindent\textbf{Visual Question Answering and Reasoning.} A number of approaches have studied
visual question answering, motivated to study multi-hop reasoning
~\citep{Hu2017NMN,Hu2018,Hudson2018,Johnson2017,Perez2017-gn,Santoro2017,Yi2018}. Some of these
works build in implicit, non-symbolic inductive biases to support compositional
reasoning into the network~\citep{Hudson2018,Hu2018,Perez2017-gn}, while others
take a more explicit symbolic approach~\citep{Yi2018,Hu2017NMN,Johnson2017,Mascharka2018,Yi2018}.
Our high level goal is centered around providing legible explanations and reasoning
traces for programs, and thus, we adopt a symbolic approach. Even in the realm of symbols,
different approaches utilize different kind of inductive biases in the mapping from
symbols (programs) to answer. While~\citet{Yi2018} favor an approach that represents
objects in a scene with a vectorized representation, and compute various operations
as manipulations of the vectors, other works take a more modular approach
~\cite{Andreas2016NMN,Hu2018,Johnson2017,Mascharka2018} where a set of symbols $\{z\}$
intsantiate neural networks with parameters $\theta_{\{z\}}$. We study the latter
approach since it is arguably more general and could conceivably transfer better 
to other tasks such as planning and control~\cite{Das2018}, lifelong learning
~\cite{Gaunt2017s,Valkov2018}~\etc

Different from all these
prior works, we provide a probabilistic scaffolding that embeds previous neural-symbolic
models, which we conceptualize should lead to better data-efficient legibility and 
the ability to debug coherence and sensitivity in reasoning. \ramapass{
We are not aware of any prior work on VQA where it is possible to reason about
coherence or sensitivity of the reasoning performed by the model.}

\noindent\textbf{Interpretable VQA.} Given its importance as a scene understanding task, and as 
a general benchmark for reasoning, there has been a lot of work in trying to interpret
VQA systems and explain their decisions~\citep{Das2016Att,Lu2016HieCoAtt,Park2018,Selvaraju2016}.
Interpretability approaches typically either perform some kind of explicit attention~\citep{Bahdanau2014}
over the question or the image~\citep{Lu2016HieCoAtt} to explain with a heat map the regions or parts
of the question the model used to arrive at an answer. Some other works develop post-hoc attribution 
techniques~\citep{Mudrakarta2018,Selvaraju2016} for providing explanations.
In this work, we are interested in an orthogonal
notion of interpretability, in terms of the legibility of the reasoning process used by the network given
symbolic instructions for a subset of examples.
More similar to our high-level motivation are approaches which take
a~\nsym{} approach, providing explanations in terms of programs used for reasoning about a question
~\cite{Andreas2016NMN,Hu2018}, optionally including a spatial attention over the image to localize the
function of the modules~\cite{Andreas2016NMN,Hu2017NMN,Mascharka2018}. In this work we augment the legibility
of the programs/reasoning from these approaches. 

\section{Experimental Setup}\label{sec:experiments}
\noindent\textbf{Dataset.} We report our results on the \ramaarxiv{CLEVR~\cite{Johnson2017} dataset}
and the SHAPES datasets
~\citep{Andreas2016NMN}. \ramaarxiv{The CLEVR dataset has been extensively used as a benchmark for
testing reasoning in VQA models in various prior
works~\cite{Hu2017NMN,Hu2018,Hudson2018,Johnson2017,Perez2017-gn,Santoro2017}
and is composed of 70,000 images and around 700K questions, answers and functional programs in the training set,
and 15,000 images and 150K questions in the validation set.
We choose first 20K examples from CLEVR v1.0 validation set and
use it as our val set. We report results on CLEVR v1.0 test set using the CLEVR evaluation
server on EvalAI~\citep{EvalAI}. The longest questions in the dataset are of
length 44 while the longest programs are of length 25. The question vocabulary has 89 tokens
while the program vocabulary has 40 tokens, with 28 possible answers.}

We investigate our design choices on the smaller
SHAPES dataset proposed in previous works~\citep{Andreas2016NMN,Hu2017NMN} for
visual question answering. The dataset is explicitly designed to test
for compositional reasoning, and contains compositionally novel questions
that the model must demonstrate generalization to
at test time. Overall there are 244 unique
questions with \texttt{yes}/\texttt{no} answers and 15,616 images~\citep{Andreas2016NMN}.
The dataset
also has annotated programs for each of the questions.
We use train, val, and test splits of 13,568, 1,024, and 1,024
$(\vx, \vz, \vi, \va)$ triplets respectively.
%
The longest questions in the dataset are of length 11 and shortest
are of length 4, while the longest programs are of length 6 and shortest
programs are of length 4. The size of the question vocabulary $\mathcal{X}$ is
14 and the program vocabulary $\mathcal{Z}$ is 12.

\noindent\textbf{Training.} On SHAPES, to simulate a data-sparse regime,
we restrict the set of question-aligned programs to
5, 10, 15, or 20\% of unique questions -- such that even
at the highest level of supervision, programs for
80\% of unique questions have never been seen during training. 
We train our program prior using a set of 1848 unique
programs simulated from the syntax (more details of this can be seen in the
Appendix).

In general, we find that performance at a given amount of supervision can be quite
variable. In addition, we also find
question coding, and module training stages tend to show
a fair amount of variance across multiple runs. To make fair comparisons, for every
experiment, we run question coding across 5 different runs, pick the best performing
model, and then run module training (updating $\theta_{\mathcal{Z}}$) across 10 different
runs. Next, we run the best model from this stage for joint training (sweeping across
values of $\gamma \in \{1, 10, 100\}$). Finally, at the end of this process we have the best
model for an entire training run. We repeat this process across five random 
datasets and report mean and variance at a given level of supervision.

\ramaarxiv{With CLEVR, we report results when we train on 1000 question-program supervision
pairs (this is $0.143$\% of all question-program pairs), along with the rest of the question,
image answer pairs (dropping the corresponding
programs). Similar to SHAPES, we report our results across 20 different choices of the subset
of 1000 questions to estimate statistical significance. In general, the CLEVR dataset
has question lengths with really large variance, and we select the subset of 1000 questions
with length at most 40. We do this to stabilize training (for both our
method as well as the baseline) and also to simulate a realistic scenario where an end
user would not want to annotate really long programs. For each choice of a subset of
1000 questions, we run our entire training pipeline (across the question coding,
module training and joint training stages) and report results on the top 15 runs
out of the 20 (for all comparisons reported in the paper), based on question coding
accuracy (see metrics below).}

\noindent\textbf{Metrics.} For question coding, we report the accuracy of the programs
predicted by the $q_\phi(\vz| \vx)$ model (determined with exact string match),
since we are interested in the legibility
of the generated programs. In module training, we report the VQA accuracy obtained by the
model, and finally, in joint training, we report both, VQA  and program
prediction accuracy since we are interested in both, getting the right answers and the legibility of the model's reasoning trace.

\ramacr{\noindent\textbf{Prior.} On SHAPES, we train the program prior $p(\vz)$ using programs simulated
using syntax (more details in the Appendix). On CLEVR, we train using 
all the programs in the training set, while restricting the number of paired
questions and programs to 1000 (as explained above).}

%
%
\noindent\textbf{Baselines.}
We compare against adaptation of a state-of-the-art semi-supervised learning approach
proposed for neural module networks by~\citet{Johnson2017}.~\citeauthor{Johnson2017} fit
the terms corresponding to $q_\phi(\vz| \vx)$ and $p(a|\vi; \theta_{\vz})$
in our model. Specifically, in question coding, they optimize
$\max_{\phi} \sum_{n} \log q_{\phi}(\vx^n|\vz^n)$ (where $n$ indexes data points
with associated questions, \ie the approach does not make use of ``unlabelled'' data).
Next, in module training 
they optimize
$\max_{\theta_{\mathcal{Z}}} \sum_{m=1}^M E_{z\sim q(\vz| \vx^m)}\left[\log p(a^m| \vi^m; \theta_{\vz})\right]$.
In joint training, they optimize the same objective with respect to the parameters
$\theta_{\mathcal{Z}}$ as well as $\phi$, using the REINFORCE gradient estimator:
$\nabla_{\phi} = \frac{1}{M}\sum_m [\log p(a| \vi; \theta_{\vz})-B] \nabla_{\phi} \log q_\phi(\vz| \vx^m)$,
where B is a baseline.
In contrast, we follow Algorithm 1 and maximize the corresponding evidence at every
stage of training.
We also report other baselines which are either ablations of our model or deterministic
variants of our full model. See Appendix for the exact setting of the hyperparameters.


\section{Results}\label{sec:results}
\noindent\textbf{SHAPES results.}
We focus on evaluating approaches by varying
the fraction of question-aligned programs
(which we denote $\%\vx\leftrightarrow\vz$) in~\cref{tab:result}. 
To put the numbers in context, a baseline LSTM + image model, which
does not use module networks, gets  an accuracy of 63.0\% 
on Test (see~\citet{Andreas2016NMN}; Table~2). This indicates that SHAPES
has highly compositional questions which is challenging to model
via conventional methods for Visual Question Answering~\citep{Antol2015}.
Overall, we make the following observations:
\begin{compactitem}[\hspace{2pt}-]\vspace{-4pt}
\item \textbf{Our \vqabayes~approach consistently improves performance in data-sparse regimes.} While both methods tend to improve with greater program supervision,
\vqabayes~quickly outpaces NMN~\citep{Johnson2017}, achieving test VQA accuracies 30-35\% points higher for ${>}5\%$ program supervision.
Notably, both methods perform similarly poorly on the test set given 5\% program supervision, suggesting this may be too few examples to learn compositional reasoning.
\ramapass{Similarly, we see that the program prediction accuracy is also significantly higher
for our approach at the end of joint training,
meaning that~\vqabayes{} is right for the right reasons.}

\item \textbf{Our question coding stage greatly improves initial program prediction.}
Our \vqabayes{} approach to question coding gets approximately
double the question coding (program prediction) accuracy
as the NMN approach (col 1, question coding). This means that it effectively
propagates groundings from question-aligned
 programs during the coding phase. Thus the initial programs produced by our approach are more legible at a lower amount of supervision
 than NMN.
 Further, we also see improved VQA performance
 after the module and joint training stages which are
 based on predicted programs.

\begin{figure}
    \centering
    \includegraphics[width=1\columnwidth]{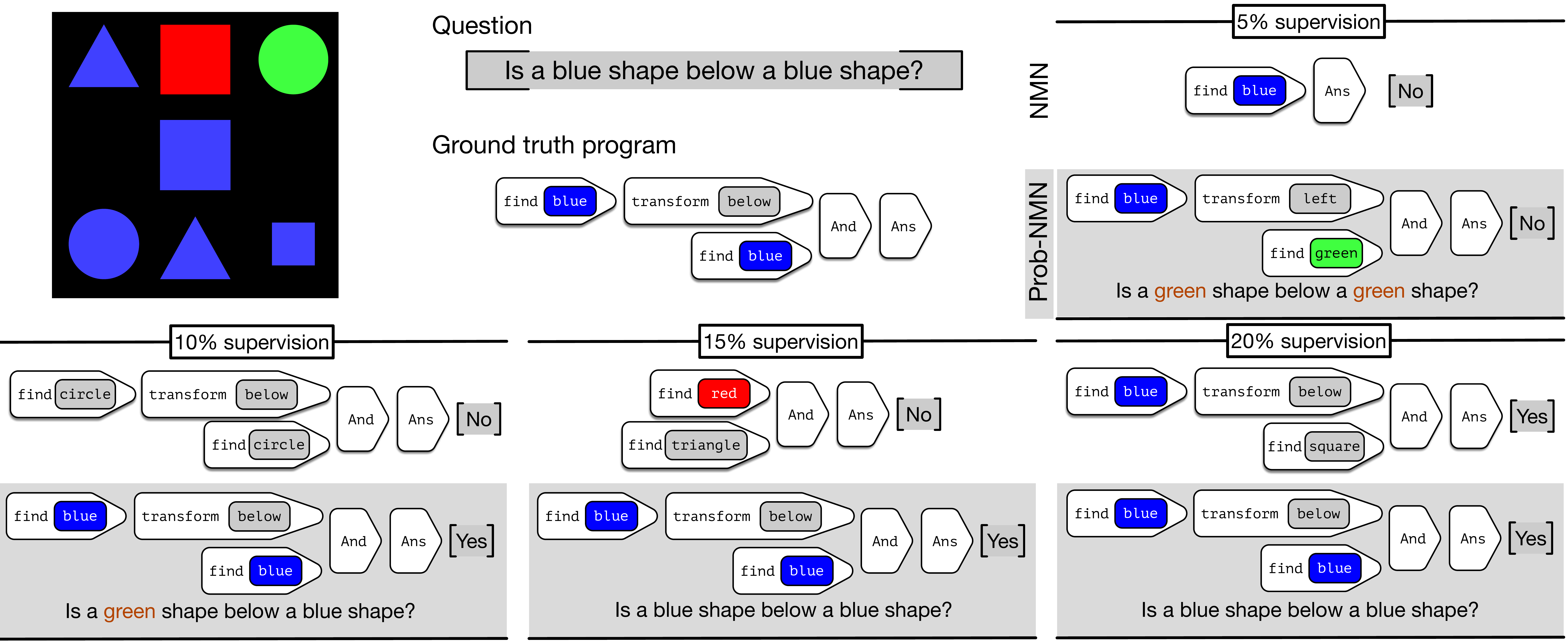}
    \vspace{-15pt}
    \caption{\textbf{Data-efficient Legibility:} Image with the corresponding question, ground truth
        program, and ground truth answer (top left). Predicted programs and answers of
        different models (NMN in white,~\vqabayes{} in gray) and reconstructed question (\vqabayes{}) for variable amount of supervision.
        We find~\vqabayes{} finds
        the right answer as well as the program more often than NMN~\citep{Johnson2017}.}
    \label{fig:example}
    \vspace{-5pt}
    \end{figure}
\item \textbf{Successful joint training improves program prediction.}
\ramapass{In general, we find that the accuracies obtained on program prediction deteriorate when the module training
stage is weak (row 1). On the other hand, higher program prediction accuracies generally lead to better module training,
which further improves the program prediction performance.}
\end{compactitem} \vspace{-4pt}
 

Figure \ref{fig:example} shows sample programs for each model. We see the limited supervision
negatively affects NMN program prediction,
 with the 5\% model resorting to simple \texttt{Find[X]${\rightarrow}$Answer} structures.
Interestingly, we find that the mistakes made by the~\vqabayes{} model,
\eg{}, green in 5\% supervision (top-right) are also made when reconstructing
the question (also substituting green for blue). Further, when the token does get corrected to blue, the question also
eventually gets reconstructed (partially correctly (10\%) and then fully (15\%)),
and the program produces the correct answer.
This indicates that there is high fidelity between the learnt question space, the
answer space and the latent space.

\begin{table*}[t]
\centering
\caption{Results (in percentage) on the SHAPES dataset with varying amounts of question-program supervision ($\%\vx\leftrightarrow\vz$), $\beta=0.1$.}
\resizebox{0.95\textwidth}{!}{
	\centering
	\renewcommand{\arraystretch}{1}
    \setlength{\tabcolsep}{3pt}
	\begin{tabular}{l c p{0pt} c c c c c p{2pt} c }
    	\toprule
        & \multirow{3}{*}{$\%\vx\leftrightarrow\vz$}
        & & \multicolumn{5}{c}{Validation During Training Stages} & & \multirow{2}{*}{\shortstack[c]{\\Test\\Accuracy}}\\ \cline{4-8}\noalign{\smallskip}
		& & & \multicolumn{2}{c}{\footnotesize [I] Question Coding} &\footnotesize [II] Module Training & \multicolumn{2}{c}{\footnotesize [III] Joint Training} & & \\
        & & & \scriptsize (Reconstruction)& \scriptsize \shortstack[c]{(Program Prediction)} & \scriptsize (VQA Accuracy) & \scriptsize (Program Prediction) & \scriptsize (VQA Accuracy) & & \scriptsize (VQA Accuracy)  \\
        \midrule 
        NMN \tiny\citep{Johnson2017} & \multirow{2}{*}{5} & & - & ~~9.28\pmvar{1.91} & 61.56\pmvar{3.59}& 0.0 \pmvar{0.0} & 63.08\pmvar{0.78}&& 60.06\pmvar{3.88}\\
        \vqabayes \tiny{(Ours)} &  & & 62.86\pmvar{5.31} & 17.12\pmvar{5.28} & 54.55\pmvar{11.31}& 28.12 \pmvar{28.12}& 72.50\pmvar{8.35} && 71.95\pmvar{11.15}\\[6pt]
        
        NMN \tiny\citep{Johnson2017} & \multirow{2}{*}{10} && - & 24.30\pmvar{2.39} & 60.31\pmvar{3.37}& 6.25 \pmvar{10.83} & 66.51\pmvar{4.10}&& 61.99\pmvar{0.96}\\
        \vqabayes \tiny{(Ours)} &  && 83.60\pmvar{5.57}& 60.18\pmvar{9.56} & 75.80\pmvar{3.62}& 90.62\pmvar{6.98}& ~96.86\pmvar{2.48}&& ~94.53\pmvar{2.06}\\[6pt]
        
        NMN \tiny\citep{Johnson2017} & \multirow{2}{*}{15} && - & 47.67\pmvar{5.02} & 69.47\pmvar{9.87}& 0.0 \pmvar{0.0} & 62.43\pmvar{0.49}&& 61.32\pmvar{2.36}\\
        \vqabayes \tiny{(Ours)} & & & 95.86\pmvar{0.20}& 84.85\pmvar{6.25} & 90.57\pmvar{3.44}&95.31 \pmvar{5.18} &98.40\pmvar{1.63}&& 97.02\pmvar{0.84}\\[6pt]
        
        NMN \tiny\citep{Johnson2017} & \multirow{2}{*}{20} && - & 58.37\pmvar{3.30} & 66.17\pmvar{7.02}& 43.75 \pmvar{43.75} & 80.68\pmvar{18.00}&& ~78.59\pmvar{19.27}\\
        \vqabayes \tiny{(Ours)} & && 96.10\pmvar{0.27}& 90.22\pmvar{1.63} & 91.81\pmvar{1.58}&96.87 \pmvar{5.41} &99.43\pmvar{0.61}&& 96.97\pmvar{1.30}\\
        \bottomrule
	\end{tabular}
}
\label{tab:result}
\end{table*}

\begin{table*}[t]
\centering
\caption{Results (in percentage) on the CLEVR dataset with $0.143$ \% $\vx\leftrightarrow\vz$, $\beta=0.1$. Validation metrics are calculated on a set of 20K examples out of CLEVR v1.0 validation split, across 15 random seeds. Test metrics are calculated on CLEVR v1.0 test split, and correspond to the best performing checkpoint across 15 random seeds.} \resizebox{0.95\textwidth}{!}{
	\centering
	\renewcommand{\arraystretch}{1}
    \setlength{\tabcolsep}{3pt}
	\begin{tabular}{l c p{0pt} c c c c c p{2pt} c }
    	\toprule
        & \multirow{3}{*}{$\%\vx\leftrightarrow\vz$}
        & & \multicolumn{5}{c}{Validation During Training Stages} & & \multirow{2}{*}{\shortstack[c]{\\Test\\Accuracy}}\\ \cline{4-8}\noalign{\smallskip}
		& & & \multicolumn{2}{c}{\footnotesize [I] Question Coding} &\footnotesize [II] Module Training & \multicolumn{2}{c}{\footnotesize [III] Joint Training} & & \\
        & & & \scriptsize (Reconstruction)& \scriptsize \shortstack[c]{(Program Prediction)} & \scriptsize (VQA Accuracy) & \scriptsize (Program Prediction) & \scriptsize (VQA Accuracy) & & \scriptsize (VQA Accuracy)  \\
        \midrule 
        NMN \tiny\citep{Johnson2017} & \multirow{2}{*}{0.143} & & - & 62.47\pmvar{9.82} & 79.26\pmvar{4.03}& 63.08 \pmvar{9.91} & 79.38\pmvar{4.21}&& 75.71\\
        \vqabayes \tiny{(Ours)} &  & &  - & 93.15\pmvar{8.61} & 94.42\pmvar{3.77}& 93.87 \pmvar{8.73}& 95.52\pmvar{4.15} && 97.73\\[6pt]
        \bottomrule
	\end{tabular}
}
\label{tab:result_clevr}
\end{table*}


Finally, the N2NMN approach~\citep{Hu2017NMN} evaluates their question-attention based module networks in the fully unsupervised setting,
getting to 96.19\% on TEST. However, the programs in this case
become non-compositional (see~\cref{sec:related_work}), as the model leaks information from questions to answers via attention,
meaning that programs no longer carry the burden of explaining the observed answers. This makes the modules illegible. In general,
our approach makes the right independence assumptions ($a\perp \vx | \vi, \vz$) which helps legibility to emerge, along with our
careful design of the three stage optimization procedure.

\ramacr{In the Appendix, we show additional
comparisons to a deterministic variant of our model (with $\beta=0$), and study the effect of optimizing the true ELBO.}

\noindent\textbf{CLEVR Results.} Our results on the CLEVR dataset~\cite{Johnson2017} reflect similar trends as the results 
on SHAPES (\cref{tab:result_clevr}). As explained in~\cref{sec:experiments}, we work with 1000 supervised question
program examples from the CLEVR dataset (0.143\% of all question program pairs).
With this, at the end of question coding,~\vqabayes{} gets to an accuracy of 93.15 $\pm$ 8.61,
while the baseline NMN approach gets to an accuracy of 62.47 $\pm$ 9.82.
These gains for the~\vqabayes{} model are reflected
in module training, where the~\vqabayes{} approach gets to an accuracy of 94.42 $\pm$ 3.77,
while the baseline is at 79.26 $\pm$ 4.03. At the end of joint training these improve
marginally to 95.52 $\pm$ 4.15 and 79.38 $\pm$ 4.21 respectively. Crucially, this is achieved
with a program generation accuracy of 93.87 $\pm$ 8.73 by our approach compared to a baseline
accuracy of 63.08 $\pm$ 9.91. Thus, the programs generated by the~\vqabayes{} model are more
legible than those by the semi-supervised NMN approach~\cite{Johnson2017}. Finally, the same
trends are reflected on the CLEVR test set as well, with \vqabayes{} outperforming NMN.
We refer to the Appendix for more details of the training regime and architectural choices
\footnote{We find the question reconstruction accuracy is close to zero, since
the model often decodes to a semantically equivalent question string which an exact match
 does not recover, thus omit that number from~\cref{tab:result_clevr}."}.

\noindent\textbf{Coherence and Sensitivity in Reasoning.}
Next we show that the probabilistic formulation can be used to
check a model's coherence in reasoning (see~\cref{fig:coherence_sensitivity}, top). Given
the image and the answer \texttt{yes}, we observe that one is able to generate multiple,
diverse reasoning patterns which lead to the answer, by sampling $\vz\sim p(\vz|a, \vi)$,
showing a kind of systematicity in reasoning~\citep{Lake2017}. On the other hand,
when we change the answer to \texttt{no} (see~\cref{fig:coherence_sensitivity}, bottom),
keeping the image the same, we observe
that the reasoning pattern changes meaningfully, yielding a program that evaluates
to the desired answer. See Appendix for a description of how we do the sampling and results
on CLEVR.
\begin{figure}
   \vspace{-5pt}
    \centering
    \includegraphics[width=0.8\columnwidth]{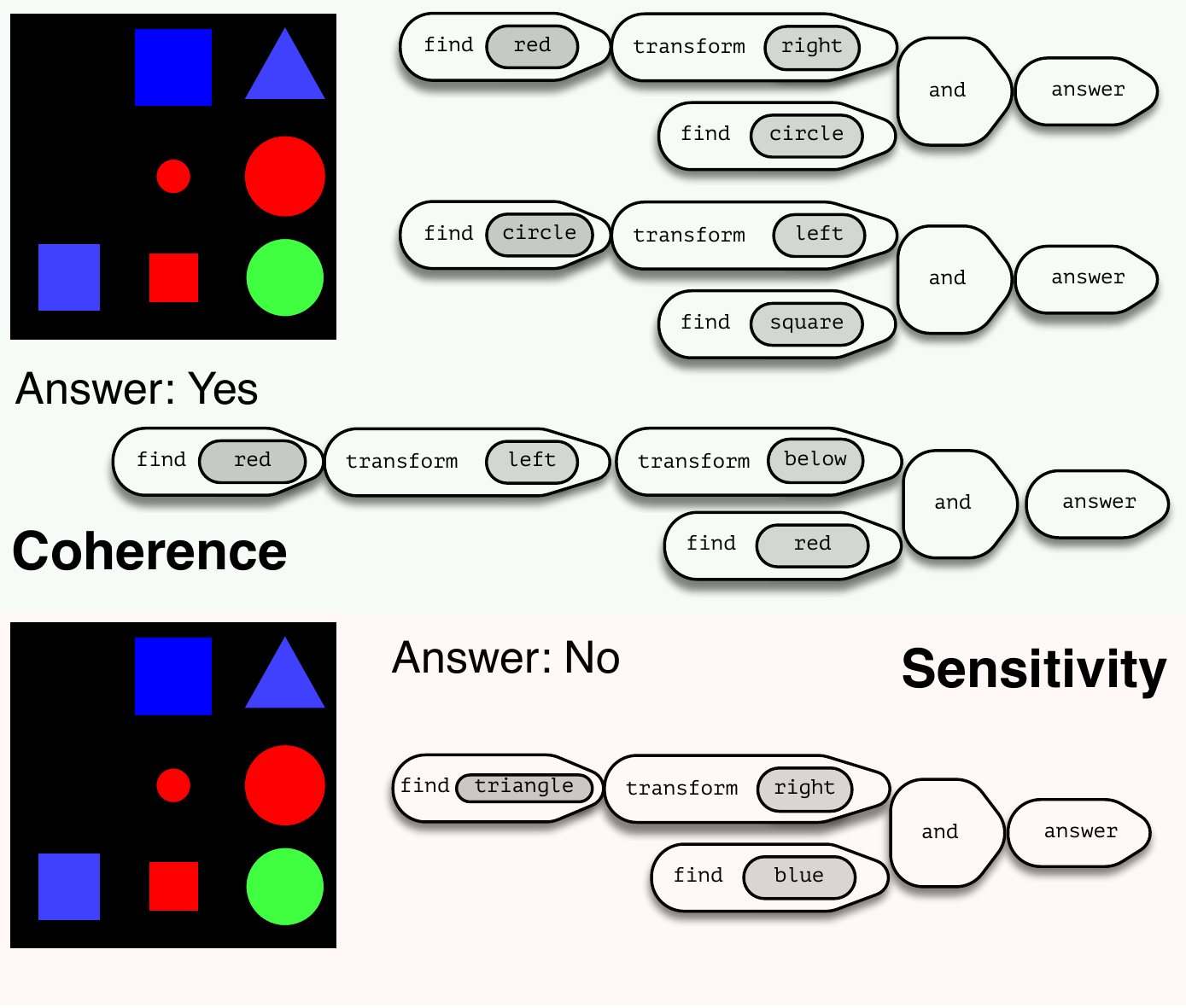}
    \vspace{-10pt}
    \caption{\textbf{Coherence and Sensitivity}: Image and a correponding answer are specified,
    and the model is asked to produce programs it believes should 
    lead to the particular answer for the given image. One can notice that the generated
    programs are consistent with each other, and evaluate to
    the specified answer. Results are shown for a \vqabayes{} model
    trained with 20\% program supervision on SHAPES.}
    \label{fig:coherence_sensitivity}
  \vspace{-5pt}
\end{figure}

\section{Discussion and Conclusion}
In this work, we discussed a probabilistic, sequential latent variable model for visual question answering, that jointly learns to parse
questions into programs, reasons about abstract programs,
and learns to execute them on images using modular neural networks.
We demonstrate that the probabilistic model endows the model with desirable properties
for interpretable reasoning systems, such as the reasoning being clearly legible given minimal number of teaching examples, and
the ability to probe into the reasoning patterns of the model, by testing their coherence (how consistent are the reasoning
patterns which lead to the same decision?) and sensitivity (how sensitive is the decision to the reasoning pattern?).
We test our model on the \ramaarxiv{CLEVR dataset as well as a dataset of compositional questions}
about SHAPES and find that handling stochasticity enables better generalization
to compositionally novel inputs.

\bibliography{bib}
\bibliographystyle{icml2019}

\clearpage
\twocolumn[\icmltitle{Appendix}]
\section{A variational lower bound on the evidence for sequential latent variable models}
\begin{nonumlemma}
    Given observations $\{\vx^n, \vz^n\}$ and $p(\vx, \vz) = p(\vz) p_{\sigma}(\vx| \vz)$,
    let $z_t$, the token at the $t^{th}$ timestep in a sequence $\vz$ be distributed
    as a categorical with parameters $\pi_t$. Let us denote 
    $\Pi = \{\pi_t\}_{t=1}^{T}$, the joint random variable over all $\pi_t$. Then,
    the following is a lower bound on the joint evidence $\mathcal{L} = \sum_n \log p(\vz^n) + \log p_\sigma(\vx^n| \vz^n)$:
    \begin{multline}
    \mathcal{L} \ge \sum_{n=1}^N \log q_{\phi}(\vz^n|\vx^n) + \log p_{\sigma}(\vx^n|\vz^n) -\\
    \KLpq{q(\Pi|\vz^n, \vx^n)}{p(\Pi)}
    \end{multline}\label{eqn:question_code_sup}
    where $p(\Pi)$ is a distribution over the sampling distributions implied by the prior $p(\vz)$
    and $q(\Pi|\vx^n, \vz^n) = q(\pi_0) \prod_{t=1}^T q(\pi_t|\vx^n, z_0^n, \cdots, z_{t-1}^n)$, where
    each $q(\pi_t|\cdot)$ is a delta distribution on the probability simplex.
\end{nonumlemma}

\begin{proof}
We provide a proof for the lemma stated in the main paper, which explains
how to obtain a lower bound on the evidence $\mathcal{L}$ for the data
samples $\{\vx^n, \vz^n\}_{n=1}^{N}$.

Let us first focus on the prior on (serialized) programs, $\log p(\vz)$.
This is a sequence model, which factorizes as,
$\log p(\vz) = \log p(z_1) + \sum_{t=2}^T \log p(z_t|\{z_i\}_{i=1}^{t-1})$, where
$p(z_t|\cdot) \sim Cat(\pi_t)$. We learn point estimates of the parameters for the prior
(in a regular sequence model), however, the every sequence has an implicit distribution
on $p(\pi_t)$. To see this, note that at every timestep (other than $t=1$), the sampling
distribution depends upon $\{z_{i}\}_{i=1}^T$, which is set of random variables. 
Let $p(\Pi, \vz)$ denote the joint distribution of all the sampling distributions
across multiple timesteps $\Pi = (\pi_1,\cdots, \pi_T)$ and $\vz$, the realized
programs from the distribution. Let $\mathcal{G} = \{\pi_i, \vz_i\}_{i=1}^{T}$

Since $\pi_t$ refers to the sampling distribution for $z_t$, we immediately
notice that $z_t \perp \mathcal{G}/\pi_t| \pi_t$. That is, $z_t$ only
depends on the sampling distribution at time $t$, meaning that
$\log p(\vz|\Pi) = \sum_{t=1}^{T} \log p(z_t|\pi_t)$.

With this (stochastic) prior parameterization, we can factorize the full
graphical model now as follows: $p(\vx, \vz, \Pi) = p(\Pi) p(\vz|\Pi) p_\sigma(\vx|\vz)$.
We treat $\Pi$ as a latent
variable. Then, the marginal likelihood for this model is:

\begin{equation}
	\mathcal{L} = \sum_{n=1}^N \log p(\vz^n) + \log p_\sigma(\vx^n|\vz^n)
\end{equation}

Notice that this equals the evidence for the original, graphical model
of our interest. Next, we write, for a single data instance:
\begin{align*}
	\log p(\vx, \vz) &= \log \int d\Pi p(\vx, \vz, \Pi)\\
	&= \log \int d\Pi p(\Pi) p(\vz|\Pi) p_\sigma(\vx|\vz)\\
	&= \log \int d\Pi \frac{p(\Pi)}{q(\Pi|\vz, \vx)} p(\vz|\Pi) p_\sigma(\vx|\vz) q(\Pi|\vz, \vx)
\end{align*}
where we multiply and divided by a variational approximation $q(\Pi|\vz, \vx)$ and apply
Jensen's inequality (due to concavity of $\log$) to get the following variational lower bound:

\begin{multline}
	\log p(\vx, \vz)
    \ge \int d\Pi q(\Pi|\vz, \vx) \left[\log p(\vz|\Pi) + \log p_\sigma(\vx|\vz)\right]\\- \KLpq{q(\Pi|\vz,\vx)}{p(\Pi)} \label{elbo:var_approx}
\end{multline}

Next, we explain how to parametrize the prior, which is the key assumption
for deriving the result. Let $\hat{\pi}_t = f(\vx, \{\vz_i\}_{i=1}^{t-1})$ be the
inferred distribution for $\vz_t$. That is, $\vz_t = Cat(\hat{\pi_t})$. Then,
let $q_\phi(\pi_t|\vx, \{\vz_i\}_{i=1}^{t-1}) = \delta(\pi_t - \hat{\pi}_t)$, that is,
a Dirac delta function at $\hat{\pi}_t$.

Further, note that we can write the second term
$E_{q(\Pi|\vz, \vx)}\left[\log p(\vz|\Pi)\right]$ as follows:

\begin{align}\label{eqn:sum_written_out}
	&\int d\Pi q(\Pi|\vz, \vx) \log p(\vz|\Pi) \\
	&=\sum_{t=1}^T \int d\pi_t q(\pi_t|\{z_i\}_{i=1}^{t-1}, \vx) \log p(z_t|\pi_t)
\end{align}

Each integral in the sum above can be simplified as follows:
\begin{align*}
	&\int d\pi q(\pi_t|\{z_i\}_{i=1}^{t-1}, \vx) \log p(z_t|\pi_t) \\
	& = \int d\pi \delta(\pi_t - \hat{\pi}_t) \log p(z_t|\pi_t) \\
	& = \log p(z_t|\hat{\pi}_t)\\
	& = \log q_\phi(z_t|\{z_i\}_{i=1}^{t-1}, \vx) \texttt{   [by definition]}
\end{align*}

Substituting into~\cref{eqn:sum_written_out}, we get:
\begin{align*}
&\sum_{t=1}^T \int d\pi_t q(\pi_t|\{z_i\}_{i=1}^{t-1}, \vx) \log p_{\sigma}(z_t|\pi_t)\\
& = \sum_{t=1}^T \log q_\phi(z_t|\{z_i\}_{i=1}^{t-1}, \vx) \\
& = \log q_\phi(\vz|\vx)
\end{align*}

This gives us the final bound, written over all the observed data points:
\begin{multline}
	\mathcal{L} \ge \sum_{n=1}^N \log p_\sigma(\vx^n|\vz^n) + \log q_\phi(\vz^n|\vx^n) \\
	- \KLpq{q(\Pi|\vx, \vz)}{p(\Pi)}
\end{multline}
\end{proof}

\section{Justification for $\alpha$}
We optimize the full evidence lower bound $\mathcal{L}_{qc} + \mathcal{U}_{qc}$ using supervised learning
for $\mathcal{L}_{qc}$ and REINFORCE for $\mathcal{U}_{qc}$. 
When using REINFORCE, notice that $\nabla_{\phi} \mathcal{U}_{qc} = \nabla_{\phi} \log q(\vz| \vx^m) [R_{qc} - B]$,
where $B$ is a baseline.
For the lower bound on $\mathcal{L}_{qc}$, the gradient is $\nabla_{\phi} \log q(\vz^n|\vx^n)$.
Notice that the scales of the gradients from the two terms are very different, since the order
of $R_{qc}$ is around the number of bits in a program, making it dominate in the early stages of 
training.  

\section{Justifications for $\beta<1$}
During question coding, we learn a latent representation of programs in presence of a question reconstructor.
Our reconstructor is a sequence model, which can maximize the likelihood of reconstructed question without
learning meaningful latent representation of programs. Recent theory from~\citet{Alemi2017broken} prescribes
changes to the ELBO, setting $\beta<1$ as a recipe to learn representations which avoid learning degenerate
latent representations in the presence of powerful decoders.
\reb{Essentially,~\citet{Alemi2017broken} identify that up to a constant factor,
the negative log-marginal likelihood term $D$=$-E_{z \sim q_{\phi}(\vz | \vx)}
[\log p_{\sigma}(\vx| \vz)]$, and the KL divergence term $R=\KL( q_{\phi}(\vz| \vx), p(\vz))$
bound the mutual information between the data $\vx$ and the latent variable $\vz$ as follows:}
\begin{equation}
	\reb{H - D \leq I(\vx, \vz) \leq R}
\end{equation}
\reb{where $H$ is the entropy of the data, which is a constant. 
One can immediately notice that the standard $\elbo=-D-R$. This means that for the same
value of the $\elbo$ one can get different models which make drastically different usage of the
latent variable (based on the achieved values of $D$ and $R$). Thus, one way to achieve
a desired behavior (of say high mutual information $I(\vx, \vz)$), is to set $\beta$ to a lower
value than the standard $\elbo$ (c.f. Eqn. 6 in~\citet{Alemi2017broken}.)}
This aligns with our goal in question coding, and so we use $\beta<1$.

\section{Implementation Details for SHAPES}
We provide more details on the modeling choices and hyperparameters used to
optimize the models in the main paper.

\noindent\textbf{Sequence to Sequence Models:} \retwo{All our sequence to sequence models described in the main paper (Section. 2)
are based on LSTM cells with a hidden state of 128 units, single layer depth,
and have a word embedding of 32 dimensions for both the question as well
as the program vocabulary.}

The when sampling from a model, we sample till the maximum sequence length
of 15 for questions and 7 for programs.

\noindent\textbf{Image CNN:} The SHAPES images are of \texttt{30x30}
in size and are processed by a two layered convolutional neural network,
which in its first layer has a \texttt{10x10} filters applied at a stride
of 10, and in the second layer, it does \texttt{1x1} convolution applied
at a stride of 1. Channel widths for both the layers are 64 dimensional. 

\noindent\textbf{Moving average baseline:} For a reward $R$, we use an 
action independent baseline $b$ to reduce variance, which tracks the moving average of the 
rewards seen so far during training. Concretely, the form for
the $q(\vz|\vx)$ network is:
\begin{equation}
\nabla J(\theta) = \mathbf{E}_{\vz\sim q(\vz| \vx)}[(R - b_t) \nabla \log q(\vz| \vx)]
\end{equation}
where, given $D$ as the decay rate for the baseline,
\begin{equation}
b_t = b_{t-1} + D * (R - b_{t-1})
\end{equation}
is the update on the baseline performed at every step of training.

In addition, the variational parameters $\phi$ are also updated via. the path
derivative $\nabla_\phi \log q_\phi(\vz| \vx)$.


\noindent\textbf{Training Details:} We use the ADAM
optimizer with a learning rate of \texttt{1e-3}, a minibatch
size of 576, and use a moving average baseline for REINFORCE,
with a decay factor of 0.99. Typical values
of $\beta$ are set to 0.1, $\alpha$ is set to 100 and $\gamma$ is chosen
on a validation set among \reb{(1.0, 10.0, 100.0)}.

\noindent\textbf{Simulating programs from known syntax:} We follow a simple
two-stage heuristic procedure for generating a set of samples to train
the program prior in our model. Firstly, we build a list of possible
future tokens given a current token, and sample a random token from that
list, and repeat the procedure for the new token to get a large set of
possible valid sequences -- we then pass the set of candidate sampled
sequences through a second stage of filtering based on constraints
from the work of~\citep{Hu2017NMN}.

\section{Additional Results on SHAPES}
\noindent\textbf{Empirical \vs{} syntactic priors.}
\reb{While our default choice of the prior $p(\vz)$ is from programs simulated
from the known syntax, it might not be possible to exhaustively enumerate
all valid program strings in general. Here, we consider the special
case where we train the prior on a set of \emph{unaligned},
ground truth programs
from the dataset. Interestingly, we find that the performance of the question
coding stage, especially at reconstructing the questions improves significantly
when we have the syntactic prior as opposed to the empirical prior (from
38.39 $\pm$ 11.92\% to 54.86 $\pm$ 6.75\% for 5\% program supervision). In terms
of program prediction accuracy, we also observe marginal improvements in the
cases where we have 5\% and 10\% supervision respectively, from 56.23 $\pm$ 2.81\%
to 65.45 $\pm$ 11.88\%. When more supervision is available, regularizing with
respect to the broader, syntactic prior hurts performance marginally, which
makes sense, as one can just treat program supervision as a supervised
learning problem in this setting.}

\noindent\textbf{Predicate Factoring.} \ramacr{We choose to parameterize a \texttt{find[green]}
operation with its own individual parameters $\theta_{\texttt{find[green]}}$
instead of having a parameterization for $\theta_{\texttt{find}}$ with $\texttt{green}$
supplied as an argument to the neural module network. In doing so we followed prior
work~\cite{Johnson2017}. However, we also note that other work~\cite{Hu2017NMN} has
explored the second parameterization.
Conceptually, choice of either is orthogonal to the benefits from our
probabilistic formulation. However, for completeness we implemented argument
based modules on SHAPES -- getting similar accuracy as atomic modules
(for both Prob-NMN and NMN). For example, at 15\% supervision, \vqabayes{} gets to a 
joint training VQA accuracy on (SHAPES-Val) of 97.53 ($\pm$ 1.23) [Prob-NMN] \vs 80.78
($\pm$ 11.75) for NMN. Thus, this design choice does not seem to have an impact 
on the relative performance of~\vqabayes{} and the NMN baseline.}

\noindent\textbf{Effect of the Program Prior $\mathbf{p(\vz)}$.}
Next, we explore the impact of regularizing the program posterior to be close to the prior $p(\vz)$,
for different choices of the
prior. 
Firstly, we disable the KL-divergence term by setting $\beta=0$ in
Equation (6), recovering a deterministic version of
our model, that still learns to reconstruct the question $\vx$ given a sampled
program $\vz$. Compared to our full model at 10\% supervision,
the performance on question coding drops to
23.14 $\pm$ 6.1\% program prediction accuracy (from 60.18 $\pm$ 9.56\%).
Interestingly, the $\beta=0$ model has a better 
performance on question reconstruction, which improves from
83.60 $\pm$ 5.57 \% to 94.3 $\pm$ 1.85\%.
This seems to indicate that, without the $\KL$ term, the model focuses solely on reconstruction and fails to learn compositionality in
the latent $\vz$ space, such that supervised grounding are poorly propagated to unsupervised questions as evidenced by
the drop in program prediction accuracy. Thus, the probabilistic model helps better achieve our high-level goal of
data-efficient legibility in the reasoning process.
In terms of the VQA accuracy on validation (at the end of joint training)
we see a drop in performance from $96.86 \pm 2.48$ to $74.82\pm 17.88$.

\noindent\textbf{Effect of Optimizing the True ELBO, $\beta=1$.}
Next, we empirically validate the intuition that for the semi-supervised setting,
approximate inference $q_\phi(\vz| \vx)$
learned using Equation (1) for sequence models often leads to solutions that do not make
meaningful use of the latent variable. For example, we find that at $\beta=1$ (and 10\% supervision),
training the true
evidence lower bound on $\mathcal{U}_{qc}$ deteriorates the question reconstruction accuracy
from 83.60 \% to 8.74\% causing a large distortion in the original question, as predicted
by the results from~\citet{Alemi2017broken}. Overall, in this setting the final accuracy on VQA
at the end of joint training is $74.45\pm 15.58$, a drop of $>$ 20 \% accuracy.

\begin{figure*}[t]
  \vspace{-5pt}
   \centering
   \includegraphics[width=\textwidth]{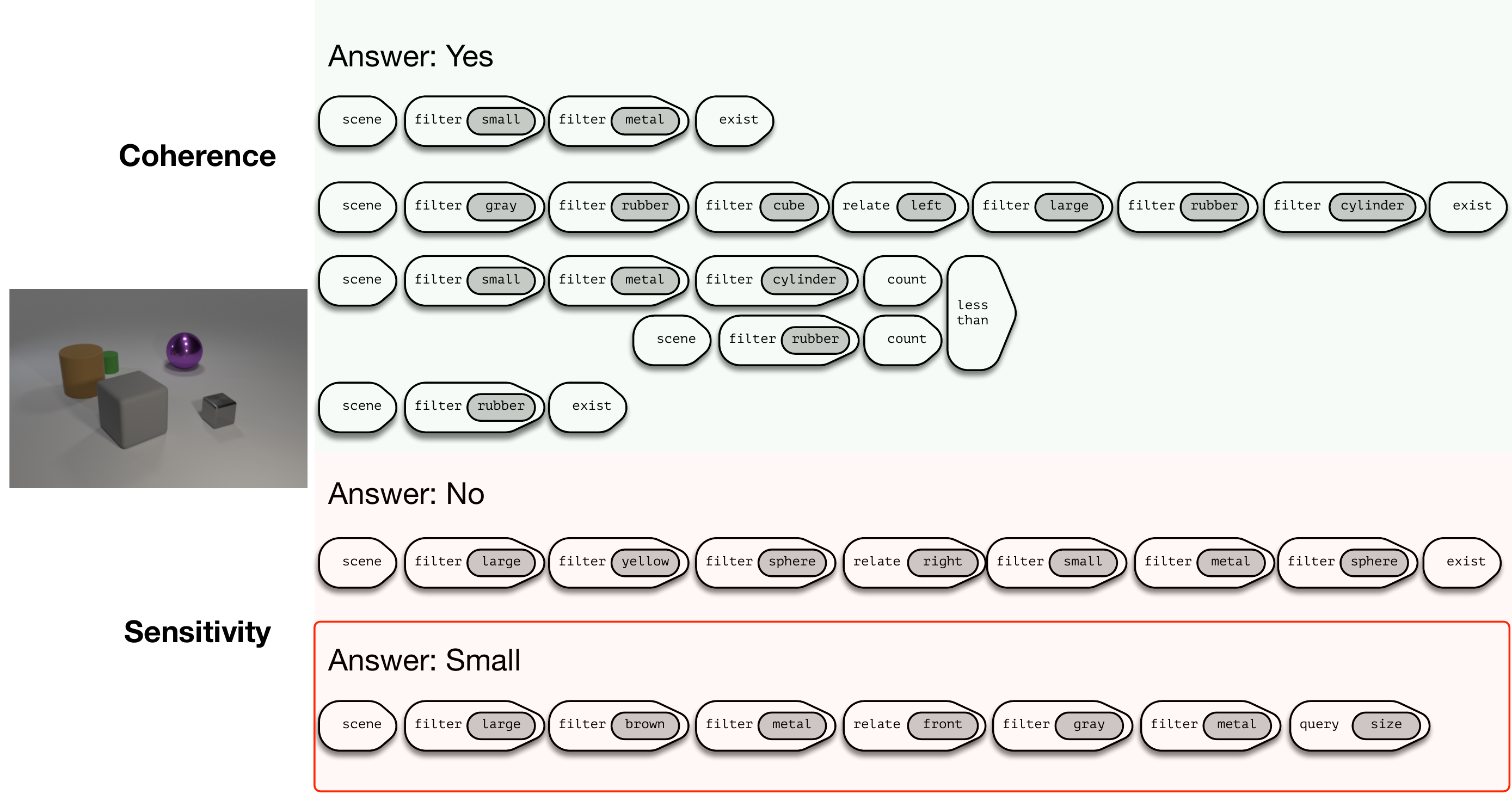}
   \vspace{-10pt}
   \caption{\textbf{Coherence and Sensitivity}: \ramacr{Image and a correponding answer are specified,
   and the model is asked to produce programs it believes should 
   lead to the particular answer for the given image. One can notice that the generated
   programs are consistent with each other, and evaluate to
   the specified answer. Results are shown for a \vqabayes{} model
   trained with 0.143\% supervision on CLEVR.}}
   \label{fig:coherence_sensitivity_clevr}
 \vspace{-10pt}
\end{figure*}

\section{Coherence and Sensitivity Sampling}
Our goal is to sample from the posterior $p(\vz|a, \vi)$, which we accomplish
by sampling from the unnormalized joint distribution
$p(\vz|a, \vi) \propto p(\vz) p(a|\vz, \vi)$. 
To answer a query conditioned on an answer $\hat{a}$,
one can simply filter out samples which have $a = \hat{a}$, and
produce the sample programs as the program, and decode the program $\vx^i \sim p(\vx|\vz^i)$
to produce the corresponding question. Note that it is also possible
to fit a (more scalable) variational
approximation $q_{\sigma}(\vz|a, \vi)$ having trained
the generative model retrospectively, following results from~\citet{Vedantam2018generative}
However, for the current purposes we found the above sampling procedure to be sufficient.

\section{Implementation Details on CLEVR}

CLEVR dataset is an order of magnitude larger than SHAPES, with larger vocabulary, longer sequences and images with higher visual complexity. We take this scale-up into account, and make suitable architectural modifications in comparison to SHAPES.

\noindent\textbf{Sequence to Sequence Models:} The encoders of our sequence-to-sequence models (program generator and question reconstructor) are based on LSTM cells with a hidden state of 256 units, a depth of two layers, and accept input word embeddings of 256 dimensions for both question and program vocabulary. We follow the same architecture for our program prior sequence model. Our decoders are similar to encoders, only with a modification of having single layer depth. When sampling from the model, we sample till the maximum sequence length in observed training data -- this results in program sequences of maximum length 28, and question sequences of maximum length 44.

\noindent\textbf{Image CNN:} We resize the CLEVR images to \texttt{224x224} dimensions and pass them through an ImageNet pre-trained ResNet-101, and obtain features from
its `third stage', which results in features of dimensions \texttt{1024x14x14}. These features are used as input to the Neural Module Network.

\noindent \textbf{Training Details} In order to get stable training with
sequence-to-sequence models on CLEVR, we found it useful to
average the log-probabilities across multiple timesteps. Without this,
in presence of a large variance in sentence lengths, we find that the model
focuses on generating the longer sequences, which is usually hard without
first understanding the structure in shorter sequences. Averaging (across) the
$T$ dimension helps with this. \ramaarxiv{This is also common practice when working with sequence
models and has been done in numerous prior works~\cite{Vinyals2015,Johnson2017,Hu2017NMN,
Lu2017}~\etc}. With higher linguistic variation in CLEVR questions, learning question reconstruction is difficult at the start of training when the latent program space isn't learned properly. To make learning easier, we increase the influence of teaching examples by also scaling question reconstruction loss with $\alpha$ in Equation (3). We use the ADAM
optimizer with a learning rate of \texttt{1e-3} for Question Coding, \texttt{1e-4} for Module Training and \texttt{1e-5} for Joint Training. We do not introduce any weight decay. We use a batch size of 256 across all stages, and a moving average baseline for REINFORCE with decay factor of 0.99, similar to SHAPES. We set $\alpha$ as 100, $\beta$ as 0.1 and $\gamma$ as 10. \ramaarxiv{We validate every 500 iterations for question coding and joint training, and every 2000 iterations for module training. We drop the learning rate to half when our primary metric (either program prediction accuracy or VQA accuracy, in respective stages) does not improve within next 3 validations. We use the code and module design choices of the state of the art TbD nets~\cite{Mascharka2018} to implement the $p_{\theta_\vz}(a| \vi)$ mapping in our model.}

\section{Results on CLEVR-Humans dataset}
\ramacr{\citet{Johnson2017} also collect a dataset of questions asked by humans
on the images in the CLEVR dataset, which is called CLEVR-humans. CLEVR-humans
is out of distribution data for both NMN and Prob-NMN. We train the models
on the regular CLEVR dataset (with 0.143~\% question-program supervision)
and evaluate the checkpoints on the CLEVR-human validation set which contains
7202 questions. We find that the gains on the regular CLEVR setup for
Prob-NMN also translate to the CLEVR-humans setting, with Prob-NMN getting
to an accuracy of 50.81 $\pm$ 1.93 \vs 45.45 \% for NMN. Both approaches
perform better than chance accuracy of 15.66\% computed by picking
the most frequent answer from the training set.
We compute $p(a|\vx, \vi) \approx \frac{1}{N} \sum_{n} p(a|\vz^n, \vi)$,
where $\vz^n \sim q(\vz|\vx)$ for this analysis, drawing $n=20$ samples.
Next we were interested in understanding if modeling the uncertainty via. Prob-NMN
helped more with out of distribution questions (such as CLEVR-humans) --
this phenomena is also called aleatoric uncertaintly. For
this, we varied the number of samples we drew, sweeping from $n=1$ to $n=20$.
For a single sample, we found that the performance drop for both NMN
as well as Prob-NMN was somewhat similar, \ie the NMN performance dropped
to 44.98 $\pm$ 2.67 while Prob-NMN dropped to 50.26 $\pm$ 2.01.
In general, we do not see a greater benefit of handling aleatoric uncertaintly via.
averaging for Prob-NMN over NMN when tested in the out-of-domain CLEVR-humans
dataset. Our hypothesis for this is that amortized inference is not handling
aleatoric uncertainty as well as we desire in our application.}

\section{Coherence and Sensitivity results on CLEVR}
\ramacr{We repeat the same procedure used for the SHAPES dataset on the CLEVR
dataset to analyze the coherence and sensitivity in the reasoning
performed by the \vqabayes{} model. Given an image and a candidate answer,
say ``yes'' as input, we find that multiple sample programs drawn from
the model are coherent with respect to each other. In~\cref{fig:coherence_sensitivity_clevr}
we show a sorted list of such programs ranked by log-probability of programs, $\vz$. Next,
when we switch the answer to ``no'', we find that the program changes in a
sensible manner, yeilding something that should correctly evaluate to the specified
answer. However, in general we find that the results are not as interpretable
for non ``yes''/``no'' answers. For example, when conditioning on ``small'', we find
that the model makes a mistake of asking for the object in front of ``large brown
metal'' object in stead of ``large brown'' object. This might be because
the proposed model does not take into account question premise (i.e. does not 
have an arrow from $\vi$ to $\vz$) meaning that a question about a large brown
metal object would never be asked about this image, since it is not present in
it.}

\end{document}


\twocolumn[
\icmltitle{Appendix: Probabilistic Neural-symbolic models for \\
Interpretable Visual Question Answering}



\icmlsetsymbol{equal}{*}

\icmlkeywords{Machine Learning, ICML}

\vskip 0.3in
]




	
\section{A variational lower bound on the evidence for sequential latent variable models}
\begin{nonumlemma}
    Given observations $\{\vx^n, \vz^n\}$ and $p(\vx, \vz) = p(\vz) p_{\sigma}(\vx| \vz)$,
    let $z_t$, the token at the $t^{th}$ timestep in a sequence $\vz$ be distributed
    as a categorical with parameters $\pi_t$. Let us denote 
    $\Pi = \{\pi_t\}_{t=1}^{T}$, the joint random variable over all $\pi_t$. Then,
    the following is a lower bound on the joint evidence $\mathcal{L} = \sum_n \log p(\vz^n) + \log p_\sigma(\vx^n| \vz^n)$:
    \begin{multline}
    \mathcal{L} \ge \sum_{n=1}^N \log q_{\phi}(\vz^n|\vx^n) + \log p_{\sigma}(\vx^n|\vz^n) -\\
    \KLpq{q(\Pi|\vz^n, \vx^n)}{p(\Pi)}
    \end{multline}\label{eqn:question_code_sup}
    where $p(\Pi)$ is a distribution over the sampling distributions implied by the prior $p(\vz)$
    and $q(\Pi|\vx^n, \vz^n) = q(\pi_0) \prod_{t=1}^T q(\pi_t|\vx^n, z_0^n, \cdots, z_{t-1}^n)$, where
    each $q(\pi_t|\cdot)$ is a delta distribution on the probability simplex.
\end{nonumlemma}

\begin{proof}
We provide a proof for the lemma stated in the main paper, which explains
how to obtain a lower bound on the evidence $\mathcal{L}$ for the data
samples $\{\vx^n, \vz^n\}_{n=1}^{N}$.

Let us first focus on the prior on (serialized) programs, $\log p(\vz)$.
This is a sequence model, which factorizes as,
$\log p(\vz) = \log p(z_1) + \sum_{t=2}^T \log p(z_t|\{z_i\}_{i=1}^{t-1})$, where
$p(z_t|\cdot) \sim Cat(\pi_t)$. We learn point estimates of the parameters for the prior
(in a regular sequence model), however, the every sequence has an implicit distribution
on $p(\pi_t)$. To see this, note that at every timestep (other than $t=1$), the sampling
distribution depends upon $\{z_{i}\}_{i=1}^T$, which is set of random variables. 
Let $p(\Pi, \vz)$ denote the joint distribution of all the sampling distributions
across multiple timesteps $\Pi = (\pi_1,\cdots, \pi_T)$ and $\vz$, the realized
programs from the distribution. Let $\mathcal{G} = \{\pi_i, \vz_i\}_{i=1}^{T}$

Since $\pi_t$ refers to the sampling distribution for $z_t$, we immediately
notice that $z_t \perp \mathcal{G}/\pi_t| \pi_t$. That is, $z_t$ only
depends on the sampling distribution at time $t$, meaning that
$\log p(\vz|\Pi) = \sum_{t=1}^{T} \log p(z_t|\pi_t)$.

With this (stochastic) prior parameterization, we can factorize the full
graphical model now as follows: $p(\vx, \vz, \Pi) = p(\Pi) p(\vz|\Pi) p_\sigma(\vx|\vz)$.
We treat $\Pi$ as a latent
variable. Then, the marginal likelihood for this model is:

\begin{equation}
	\mathcal{L} = \sum_{n=1}^N \log p(\vz^n) + \log p_\sigma(\vx^n|\vz^n)
\end{equation}

Notice that this equals the evidence for the original, graphical model
of our interest. Next, we write, for a single data instance:
\begin{align*}
	\log p(\vx, \vz) &= \log \int d\Pi p(\vx, \vz, \Pi)\\
	&= \log \int d\Pi p(\Pi) p(\vz|\Pi) p_\sigma(\vx|\vz)\\
	&= \log \int d\Pi \frac{p(\Pi)}{q(\Pi|\vz, \vx)} p(\vz|\Pi) p_\sigma(\vx|\vz) q(\Pi|\vz, \vx)
\end{align*}
where we multiply and divided by a variational approximation $q(\Pi|\vz, \vx)$ and apply
Jensen's inequality (due to concavity of $\log$) to get the following variational lower bound:

\begin{multline}
	\log p(\vx, \vz)
    \ge \int d\Pi q(\Pi|\vz, \vx) \left[\log p(\vz|\Pi) + \log p_\sigma(\vx|\vz)\right]\\- \KLpq{q(\Pi|\vz,\vx)}{p(\Pi)} \label{elbo:var_approx}
\end{multline}

Next, we explain how to parametrize the prior, which is the key assumption
for deriving the result. Let $\hat{\pi}_t = f(\vx, \{\vz_i\}_{i=1}^{t-1})$ be the
inferred distribution for $\vz_t$. That is, $\vz_t = Cat(\hat{\pi_t})$. Then,
let $q_\phi(\pi_t|\vx, \{\vz_i\}_{i=1}^{t-1}) = \delta(\pi_t - \hat{\pi}_t)$, that is,
a Dirac delta function at $\hat{\pi}_t$.

Further, note that we can write the second term
$E_{q(\Pi|\vz, \vx)}\left[\log p(\vz|\Pi)\right]$ as follows:

\begin{align}\label{eqn:sum_written_out}
	&\int d\Pi q(\Pi|\vz, \vx) \log p(\vz|\Pi) \\
	&=\sum_{t=1}^T \int d\pi_t q(\pi_t|\{z_i\}_{i=1}^{t-1}, \vx) \log p(z_t|\pi_t)
\end{align}

Each integral in the sum above can be simplified as follows:
\begin{align*}
	&\int d\pi q(\pi_t|\{z_i\}_{i=1}^{t-1}, \vx) \log p(z_t|\pi_t) \\
	& = \int d\pi \delta(\pi_t - \hat{\pi}_t) \log p(z_t|\pi_t) \\
	& = \log p(z_t|\hat{\pi}_t)\\
	& = \log q_\phi(z_t|\{z_i\}_{i=1}^{t-1}, \vx) \texttt{   [by definition]}
\end{align*}

Substituting into~\cref{eqn:sum_written_out}, we get:
\begin{align*}
&\sum_{t=1}^T \int d\pi_t q(\pi_t|\{z_i\}_{i=1}^{t-1}, \vx) \log p_{\sigma}(z_t|\pi_t)\\
& = \sum_{t=1}^T \log q_\phi(z_t|\{z_i\}_{i=1}^{t-1}, \vx) \\
& = \log q_\phi(\vz|\vx)
\end{align*}

This gives us the final bound, written over all the observed data points:
\begin{multline}
	\mathcal{L} \ge \sum_{n=1}^N \log p_\sigma(\vx^n|\vz^n) + \log q_\phi(\vz^n|\vx^n) \\
	- \KLpq{q(\Pi|\vx, \vz)}{p(\Pi)}
\end{multline}
\end{proof}



	


\section{Justification for $\alpha$}
We optimize the full evidence lower bound $\mathcal{L}_{qc} + \mathcal{U}_{qc}$ using supervised learning
for $\mathcal{L}_{qc}$ and REINFORCE for $\mathcal{U}_{qc}$. 
When using REINFORCE, notice that $\nabla_{\phi} \mathcal{U}_{qc} = \nabla_{\phi} \log q(\vz| \vx^m) [R_{qc} - B]$,
where $B$ is a baseline.
For the lower bound on $\mathcal{L}_{qc}$, the gradient is $\nabla_{\phi} \log q(\vz^n|\vx^n)$.
Notice that the scales of the gradients from the two terms are very different, since the order
of $R_{qc}$ is around the number of bits in a program, making it dominate in the early stages of 
training.  

\section{Justifications for $\beta<1$}
During question coding, we learn a latent representation of programs in presence of a question reconstructor.
Our reconstructor is a sequence model, which can maximize the likelihood of reconstructed question without
learning meaningful latent representation of programs. Recent theory from~\citet{Alemi2017broken} prescribes
changes to the ELBO, setting $\beta<1$ as a recipe to learn representations which avoid learning degenerate
latent representations in the presence of powerful decoders.
\reb{Essentially,~\citet{Alemi2017broken} identify that up to a constant factor,
the negative log-marginal likelihood term $D$=$-E_{z \sim q_{\phi}(\vz | \vx)}
[\log p_{\sigma}(\vx| \vz)]$, and the KL divergence term $R=\KL( q_{\phi}(\vz| \vx), p(\vz))$
bound the mutual information between the data $\vx$ and the latent variable $\vz$ as follows:}
\begin{equation}
	\reb{H - D \leq I(\vx, \vz) \leq R}
\end{equation}
\reb{where $H$ is the entropy of the data, which is a constant. 
One can immediately notice that the standard $\elbo=-D-R$. This means that for the same
value of the $\elbo$ one can get different models which make drastically different usage of the
latent variable (based on the achieved values of $D$ and $R$). Thus, one way to achieve
a desired behavior (of say high mutual information $I(\vx, \vz)$), is to set $\beta$ to a lower
value than the standard $\elbo$ (c.f. Eqn. 6 in~\citet{Alemi2017broken}.)}
This aligns with our goal in question coding, and so we use $\beta<1$.

\section{Implementation Details for SHAPES}
We provide more details on the modeling choices and hyperparameters used to
optimize the models in the main paper.

\noindent\textbf{Sequence to Sequence Models:} \retwo{All our sequence to sequence models described in the main paper (Section. 2)
are based on LSTM cells with a hidden state of 128 units, single layer depth,
and have a word embedding of 32 dimensions for both the question as well
as the program vocabulary.}

The when sampling from a model, we sample till the maximum sequence length
of 15 for questions and 7 for programs.

\noindent\textbf{Image CNN:} The SHAPES images are of \texttt{30x30}
in size and are processed by a two layered convolutional neural network,
which in its first layer has a \texttt{10x10} filters applied at a stride
of 10, and in the second layer, it does \texttt{1x1} convolution applied
at a stride of 1. Channel widths for both the layers are 64 dimensional. 

\noindent\textbf{Moving average baseline:} For a reward $R$, we use an 
action independent baseline $b$ to reduce variance, which tracks the moving average of the 
rewards seen so far during training. Concretely, the form for
the $q(\vz|\vx)$ network is:
\begin{equation}
\nabla J(\theta) = \mathbf{E}_{\vz\sim q(\vz| \vx)}[(R - b_t) \nabla \log q(\vz| \vx)]
\end{equation}
where, given $D$ as the decay rate for the baseline,
\begin{equation}
b_t = b_{t-1} + D * (R - b_{t-1})
\end{equation}
is the update on the baseline performed at every step of training.

In addition, the variational parameters $\phi$ are also updated via. the path
derivative $\nabla_\phi \log q_\phi(\vz| \vx)$.


\noindent\textbf{Training Details:} We use the ADAM
optimizer with a learning rate of \texttt{1e-3}, a minibatch
size of 576, and use a moving average baseline for REINFORCE,
with a decay factor of 0.99. Typical values
of $\beta$ are set to 0.1, $\alpha$ is set to 100 and $\gamma$ is chosen
on a validation set among \reb{(1.0, 10.0, 100.0)}.

\noindent\textbf{Simulating programs from known syntax:} We follow a simple
two-stage heuristic procedure for generating a set of samples to train
the program prior in our model. Firstly, we build a list of possible
future tokens given a current token, and sample a random token from that
list, and repeat the procedure for the new token to get a large set of
possible valid sequences -- we then pass the set of candidate sampled
sequences through a second stage of filtering based on constraints
from the work of~\citep{Hu2017NMN}.

\section{Additional Results on SHAPES}
\noindent\textbf{Empirical \vs{} syntactic priors.}
\reb{While our default choice of the prior $p(\vz)$ is from programs simulated
from the known syntax, it might not be possible to exhaustively enumerate
all valid program strings in general. Here, we consider the special
case where we train the prior on a set of \emph{unaligned},
ground truth programs
from the dataset. Interestingly, we find that the performance of the question
coding stage, especially at reconstructing the questions improves significantly
when we have the syntactic prior as opposed to the empirical prior (from
38.39 $\pm$ 11.92\% to 54.86 $\pm$ 6.75\% for 5\% program supervision). In terms
of program prediction accuracy, we also observe marginal improvements in the
cases where we have 5\% and 10\% supervision respectively, from 56.23 $\pm$ 2.81\%
to 65.45 $\pm$ 11.88\%. When more supervision is available, regularizing with
respect to the broader, syntactic prior hurts performance marginally, which
makes sense, as one can just treat program supervision as a supervised
learning problem in this setting.}

\noindent\textbf{Predicate Factoring.} \ramacr{We choose to parameterize a \texttt{find[green]}
operation with its own individual parameters $\theta_{\texttt{find[green]}}$
instead of having a parameterization for $\theta_{\texttt{find}}$ with $\texttt{green}$
supplied as an argument to the neural module network. In doing so we followed prior
work~\cite{Johnson2017}. However, we also note that other work~\cite{Hu2017NMN} has
explored the second parameterization.
Conceptually, choice of either is orthogonal to the benefits from our
probabilistic formulation. However, for completeness we implemented argument
based modules on SHAPES -- getting similar accuracy as atomic modules
(for both Prob-NMN and NMN). For example, at 15\% supervision, \vqabayes{} gets to a 
joint training VQA accuracy on (SHAPES-Val) of 97.53 ($\pm$ 1.23) [Prob-NMN] \vs 80.78
($\pm$ 11.75) for NMN. Thus, this design choice does not seem to have an impact 
on the relative performance of~\vqabayes{} and the NMN baseline.}

\noindent\textbf{Effect of the Program Prior $\mathbf{p(\vz)}$.}
Next, we explore the impact of regularizing the program posterior to be close to the prior $p(\vz)$,
for different choices of the
prior. 
Firstly, we disable the KL-divergence term by setting $\beta=0$ in
Equation (6), recovering a deterministic version of
our model, that still learns to reconstruct the question $\vx$ given a sampled
program $\vz$. Compared to our full model at 10\% supervision,
the performance on question coding drops to
23.14 $\pm$ 6.1\% program prediction accuracy (from 60.18 $\pm$ 9.56\%).
Interestingly, the $\beta=0$ model has a better 
performance on question reconstruction, which improves from
83.60 $\pm$ 5.57 \% to 94.3 $\pm$ 1.85\%.
This seems to indicate that, without the $\KL$ term, the model focuses solely on reconstruction and fails to learn compositionality in
the latent $\vz$ space, such that supervised grounding are poorly propagated to unsupervised questions as evidenced by
the drop in program prediction accuracy. Thus, the probabilistic model helps better achieve our high-level goal of
data-efficient legibility in the reasoning process.
In terms of the VQA accuracy on validation (at the end of joint training)
we see a drop in performance from $96.86 \pm 2.48$ to $74.82\pm 17.88$.

\noindent\textbf{Effect of Optimizing the True ELBO, $\beta=1$.}
Next, we empirically validate the intuition that for the semi-supervised setting,
approximate inference $q_\phi(\vz| \vx)$
learned using Equation (1) for sequence models often leads to solutions that do not make
meaningful use of the latent variable. For example, we find that at $\beta=1$ (and 10\% supervision),
training the true
evidence lower bound on $\mathcal{U}_{qc}$ deteriorates the question reconstruction accuracy
from 83.60 \% to 8.74\% causing a large distortion in the original question, as predicted
by the results from~\citet{Alemi2017broken}. Overall, in this setting the final accuracy on VQA
at the end of joint training is $74.45\pm 15.58$, a drop of $>$ 20 \% accuracy.

\begin{figure*}[t]
  \vspace{-5pt}
   \centering
   \includegraphics[width=\textwidth]{figures/coherence_sensitivity_clevr.pdf}
   \vspace{-10pt}
   \caption{\textbf{Coherence and Sensitivity}: \ramacr{Image and a correponding answer are specified,
   and the model is asked to produce programs it believes should 
   lead to the particular answer for the given image. One can notice that the generated
   programs are consistent with each other, and evaluate to
   the specified answer. Results are shown for a \vqabayes{} model
   trained with 0.143\% supervision on CLEVR.}}
   \label{fig:coherence_sensitivity_clevr}
 \vspace{-10pt}
\end{figure*}

\section{Coherence and Sensitivity Sampling}
Our goal is to sample from the posterior $p(\vz|a, \vi)$, which we accomplish
by sampling from the unnormalized joint distribution
$p(\vz|a, \vi) \propto p(\vz) p(a|\vz, \vi)$. 
To answer a query conditioned on an answer $\hat{a}$,
one can simply filter out samples which have $a = \hat{a}$, and
produce the sample programs as the program, and decode the program $\vx^i \sim p(\vx|\vz^i)$
to produce the corresponding question. Note that it is also possible
to fit a (more scalable) variational
approximation $q_{\sigma}(\vz|a, \vi)$ having trained
the generative model retrospectively, following results from~\citet{Vedantam2018generative}
However, for the current purposes we found the above sampling procedure to be sufficient.

\section{Implementation Details on CLEVR}

CLEVR dataset is an order of magnitude larger than SHAPES, with larger vocabulary, longer sequences and images with higher visual complexity. We take this scale-up into account, and make suitable architectural modifications in comparison to SHAPES.

\noindent\textbf{Sequence to Sequence Models:} The encoders of our sequence-to-sequence models (program generator and question reconstructor) are based on LSTM cells with a hidden state of 256 units, a depth of two layers, and accept input word embeddings of 256 dimensions for both question and program vocabulary. We follow the same architecture for our program prior sequence model. Our decoders are similar to encoders, only with a modification of having single layer depth. When sampling from the model, we sample till the maximum sequence length in observed training data -- this results in program sequences of maximum length 28, and question sequences of maximum length 44.

\noindent\textbf{Image CNN:} We resize the CLEVR images to \texttt{224x224} dimensions and pass them through an ImageNet pre-trained ResNet-101, and obtain features from
its `third stage', which results in features of dimensions \texttt{1024x14x14}. These features are used as input to the Neural Module Network.

\noindent \textbf{Training Details} In order to get stable training with
sequence-to-sequence models on CLEVR, we found it useful to
average the log-probabilities across multiple timesteps. Without this,
in presence of a large variance in sentence lengths, we find that the model
focuses on generating the longer sequences, which is usually hard without
first understanding the structure in shorter sequences. Averaging (across) the
$T$ dimension helps with this. \ramaarxiv{This is also common practice when working with sequence
models and has been done in numerous prior works~\cite{Vinyals2015,Johnson2017,Hu2017NMN,
Lu2017}~\etc}. With higher linguistic variation in CLEVR questions, learning question reconstruction is difficult at the start of training when the latent program space isn't learned properly. To make learning easier, we increase the influence of teaching examples by also scaling question reconstruction loss with $\alpha$ in Equation (3). We use the ADAM
optimizer with a learning rate of \texttt{1e-3} for Question Coding, \texttt{1e-4} for Module Training and \texttt{1e-5} for Joint Training. We do not introduce any weight decay. We use a batch size of 256 across all stages, and a moving average baseline for REINFORCE with decay factor of 0.99, similar to SHAPES. We set $\alpha$ as 100, $\beta$ as 0.1 and $\gamma$ as 10. \ramaarxiv{We validate every 500 iterations for question coding and joint training, and every 2000 iterations for module training. We drop the learning rate to half when our primary metric (either program prediction accuracy or VQA accuracy, in respective stages) does not improve within next 3 validations. We use the code and module design choices of the state of the art TbD nets~\cite{Mascharka2018} to implement the $p_{\theta_\vz}(a| \vi)$ mapping in our model.}

\section{Results on CLEVR-Humans dataset}
\ramacr{\citet{Johnson2017} also collect a dataset of questions asked by humans
on the images in the CLEVR dataset, which is called CLEVR-humans. CLEVR-humans
is out of distribution data for both NMN and Prob-NMN. We train the models
on the regular CLEVR dataset (with 0.143~\% question-program supervision)
and evaluate the checkpoints on the CLEVR-human validation set which contains
7202 questions. We find that the gains on the regular CLEVR setup for
Prob-NMN also translate to the CLEVR-humans setting, with Prob-NMN getting
to an accuracy of 50.81 $\pm$ 1.93 \vs 45.45 \% for NMN. Both approaches
perform better than chance accuracy of 15.66\% computed by picking
the most frequent answer from the training set.
We compute $p(a|\vx, \vi) \approx \frac{1}{N} \sum_{n} p(a|\vz^n, \vi)$,
where $\vz^n \sim q(\vz|\vx)$ for this analysis, drawing $n=20$ samples.
Next we were interested in understanding if modeling the uncertainty via. Prob-NMN
helped more with out of distribution questions (such as CLEVR-humans) --
this phenomena is also called aleatoric uncertaintly. For
this, we varied the number of samples we drew, sweeping from $n=1$ to $n=20$.
For a single sample, we found that the performance drop for both NMN
as well as Prob-NMN was somewhat similar, \ie the NMN performance dropped
to 44.98 $\pm$ 2.67 while Prob-NMN dropped to 50.26 $\pm$ 2.01.
In general, we do not see a greater benefit of handling aleatoric uncertaintly via.
averaging for Prob-NMN over NMN when tested in the out-of-domain CLEVR-humans
dataset. Our hypothesis for this is that amortized inference is not handling
aleatoric uncertainty as well as we desire in our application.}

\section{Coherence and Sensitivity results on CLEVR}
\ramacr{We repeat the same procedure used for the SHAPES dataset on the CLEVR
dataset to analyze the coherence and sensitivity in the reasoning
performed by the \vqabayes{} model. Given an image and a candidate answer,
say ``yes'' as input, we find that multiple sample programs drawn from
the model are coherent with respect to each other. In~\cref{fig:coherence_sensitivity_clevr}
we show a sorted list of such programs ranked by log-probability of programs, $\vz$. Next,
when we switch the answer to ``no'', we find that the program changes in a
sensible manner, yeilding something that should correctly evaluate to the specified
answer. However, in general we find that the results are not as interpretable
for non ``yes''/``no'' answers. For example, when conditioning on ``small'', we find
that the model makes a mistake of asking for the object in front of ``large brown
metal'' object in stead of ``large brown'' object. This might be because
the proposed model does not take into account question premise (i.e. does not 
have an arrow from $\vi$ to $\vz$) meaning that a question about a large brown
metal object would never be asked about this image, since it is not present in
it.}

\bibliography{../bib.bib}
\bibliographystyle{icml2019}